\newtheorem{theorem}{Theorem}
\newtheorem{lemma}{Lemma}
\begin{document}
\title{Unveiling Class-Labeling Structure for Universal Domain Adaptation}

\author{\IEEEauthorblockN{Yueming Yin, 
Zhen Yang,~\IEEEmembership{Senior Member,~IEEE}, 
Xiaofu Wu, and 
Haifeng Hu}
\thanks{The authors are with School of Telecommunication and Information Engineering, Nanjing University of Posts and Telecommunications, Nanjing 210003, China.

Zhen Yang is with Key Lab of Broadband Wireless Communication and Sensor Network Technology, Ministry of Education, Nanjing University of Posts and Telecommunications, Nanjing 210003, China.

Xiaofu Wu and Haifeng Hu are with National Engineering Research Center of Communications and Networking, Nanjing University of Posts and Telecommunications, Nanjing 210003, China.

Corresponding author: Yueming Yin (email: 1018010514@njupt.edu.cn) and Zhen Yang (email: yangz@njupt.edu.cn).}}

\maketitle

\begin{abstract}
As a more practical setting for unsupervised domain adaptation, Universal Domain Adaptation (UDA) is recently introduced, where the target label set is unknown. One of the big challenges in UDA is how to determine the common label set shared by source and target domains, as there is simply no labeling available in the target domain. In this paper, we employ a probabilistic approach for locating the common label set, where each source class may come from the common label set with a probability. In particular, we propose a novel approach for evaluating the probability of each source class from the common label set, where this probability is computed by the prediction margin accumulated over the whole target domain. Then, we propose a simple universal adaptation network (S-UAN) by incorporating the probabilistic structure for the common label set. Finally, we analyse the generalization bound focusing on the common label set and explore the properties on the target risk for UDA. Extensive experiments indicate that S-UAN works well in different UDA settings and outperforms the state-of-the-art methods by large margins.
\end{abstract}

\begin{IEEEkeywords}
Universal domain adaptation, prediction margin, simple universal adaptation network, probabilistic structure.
\end{IEEEkeywords}

\section{Introduction}
\label{Introduction}
\IEEEPARstart{D}{omain} adaptation (DA) is a technology that allows learning models trained on available source domain (or domains) generalize to unseen target domain. To verify the performance of domain adaptation, supervised information of the target domain is required in the test stage. For training, only the target data is available if we implement domain adaptation on a new target task without any annotation. However, for classification applications, previous works supposed that the relationship between source and target label sets is known, i.e. the target label set is the same label set as source domain \cite{DBLP:conf/eccv/SaenkoKFD10,DBLP:journals/pami/DuanTX12,DBLP:conf/icml/ZhangSMW13,DBLP:journals/corr/TzengHZSD14,DBLP:conf/icml/LongC0J15,DBLP:journals/jmlr/GaninUAGLLML16,DBLP:conf/iccv/HausserFMC17,long2016unsupervised,DBLP:conf/cvpr/TzengHSD17,DBLP:conf/cvpr/SaitoWUH18,long2018conditional,DBLP:conf/iccv/CarlucciPCRB17,DBLP:conf/icml/XieZCC18,DBLP:conf/cvpr/MurezKKRK18,DBLP:conf/cvpr/VolpiMSM18,DBLP:conf/cvpr/HuKSC18,DBLP:conf/cvpr/ChenLWWC18,ganin2016domain}, a subset of source label set \cite{DBLP:conf/cvpr/CaoL0J18,DBLP:conf/cvpr/0017DLO18,cao2018partialeccv,9108582} or a set partly intersecting with source label set \cite{saito2018open,panareda2017open}. Recently, a new domain adaptation scenario has been established in \cite{you2019universal}, called Universal Domain Adaptation. Different from previous settings, UDA is a general scenario of domain adaptation, where new, unknown classes can be present and some known classes can not be found in the target domain. To solve UDA problems, a universal adaptation network (UAN) \cite{you2019universal} was proposed as shown in Fig. \ref{UAN}. In UAN, a novel weighting mechanism is proposed to weight samples when implementing domain adversarial training as shown in Eq. \ref{ED}, where $w^s(\mathbf{x})$ $($or $w^t(\mathbf{x})$$)$ is defined as the probability that a source (or target) sample $\mathbf{x}$ belongs to the common label set. This method aims to pick up samples in the common label set for alignment, because only these samples can be transferred positively. When the label in the target domain is available for verification, the ideal distribution of $w^s(\mathbf{x})$ $($or $w^t(\mathbf{x})$$)$ is shown as Fig. \ref{ideal}. In this case, $w^s(\mathbf{x})$ $($or $w^t(\mathbf{x})$$)$ is always equal to 1 if $\mathbf{x}$ belongs to the common label set, while $w^s(\mathbf{x})$ $($or $w^t(\mathbf{x})$$)$ is always equal to 0 if $\mathbf{x}$ belongs to private label sets. Then, the universal domain adaptation degenerates to close-set domain adaptation.

\begin{equation}
\begin{aligned} 
E_{D}=&-\mathbb{E}_{\mathbf{x} \sim p} w^{s}(\mathbf{x}) \log D(F(\mathbf{x}))
\\ &-\mathbb{E}_{\mathbf{x} \sim q} w^{t}(\mathbf{x}) \log (1-D(F(\mathbf{x}))),
\label{ED}
\end{aligned}
\end{equation}

\begin{equation}
\begin{aligned} 
w^{s}(\mathbf{x})&=\frac{H(\hat{\mathbf{y}})}{\log \left|\mathcal{C}_{s}\right|}-\hat{d}^{\prime}(\mathbf{x}) ,
\label{ws-uda}
\end{aligned}
\end{equation}

\begin{equation}
\begin{aligned} 
w^{t}(\mathbf{x})&=\hat{d}^{\prime}(\mathbf{x})-\frac{H(\hat{\mathbf{y}})}{\log \left|\mathcal{C}_{s}\right|}.
\label{wt-uda}
\end{aligned}
\end{equation}

In the weighting mechanism of UAN, a non-adversarial domain classifier $D'$ is designed specifically for calculating $w^s(\mathbf{x})$ (Eq. \ref{ws-uda}) and $w^t(\mathbf{x})$ (Eq. \ref{wt-uda}), where $\hat{d}^{\prime}(\mathbf{x})$ is the probability that $\mathbf{x}$ belongs to the source domain estimated by the independent binary classifier $D'$. However, the use of $D'$ is still not sufficient to distinguish between the common and private label sets in the source domain considerably as shown in Fig. \ref{w-UAN}. In our weighting mechanism, the class-labeling structure is fully utilized to capture discriminative information, which shows a conspicuous difference between common and private label sets as shown in Fig. \ref{w-ours}.

\begin{figure}
	\centering
	\includegraphics[width=0.8\linewidth]{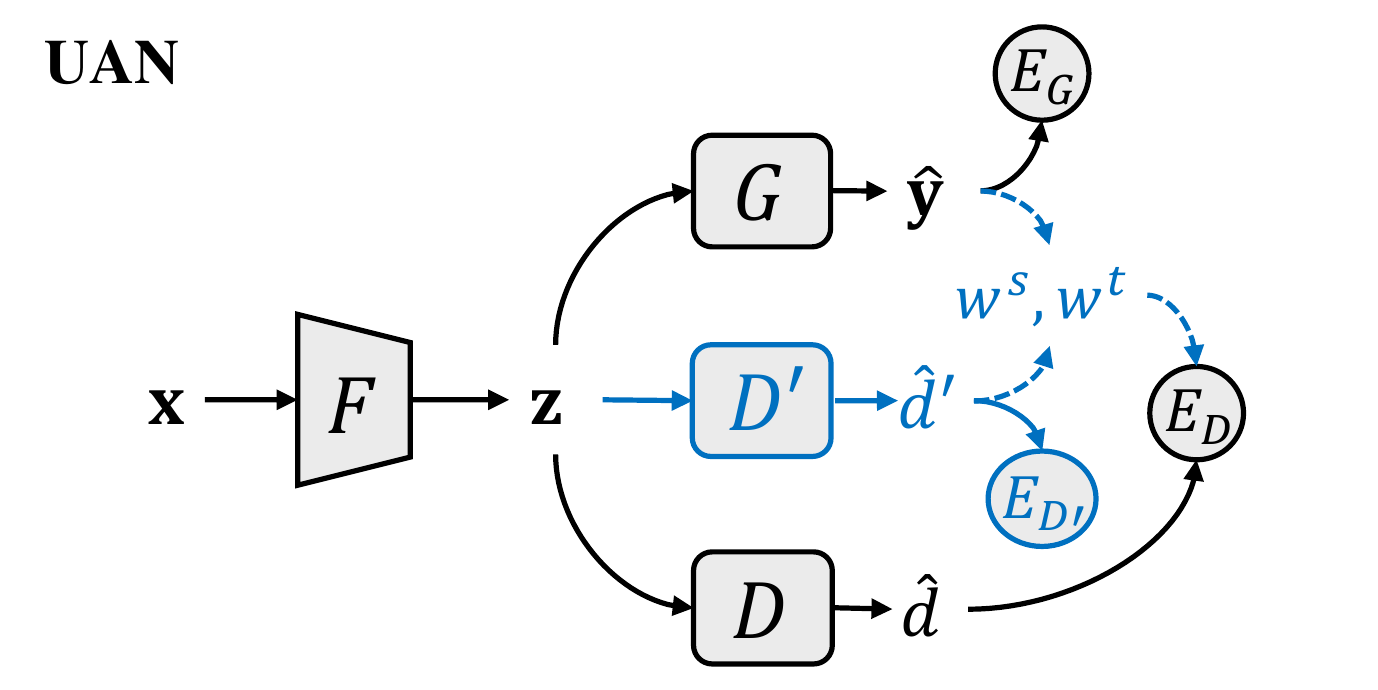}
	\caption{The training phases of the UAN.}
	\label{UAN}
\end{figure}

By empirically computing the distribution of $w^s(\mathbf{x})$ $($or $w^t(\mathbf{x})$$)$ in UAN, we find that source label set cannot be well separated by its weighting mechanism as shown in Fig. \ref{w-UAN}. There is a confusing hypothesis in their transferability criterion \cite{you2019universal}: in the common label set, the prediction entropy of samples from source domain becomes larger because they are influenced by the high entropy samples from target domain. Based on this hypothesis, the entropy of source prediction $H(\hat{\mathbf{y}})$ is used to weight samples in Eq. \ref{ws-uda}. Actually, during the training process, it is ensured that each source sample is confident in its prediction. Even if the domain adversarial training encourages features to become similar between two domains, feature extractor prefers to move the target distribution to the source ones, since this is the most effective way to minimize classification loss and maximize the domain loss simultaneously. The behavior shown in Fig. \ref{behavior} is the basic assumption of domain adaptation \cite{GaninL15}. This results in a smaller prediction entropy of target samples, rather than a larger prediction entropy of source samples \cite{you2019universal}.

\begin{figure}
	\centering
	\subfigure[Ideal]{
		\begin{minipage}[c]{0.24\textwidth}
			\centering
			\includegraphics[width=1\linewidth]{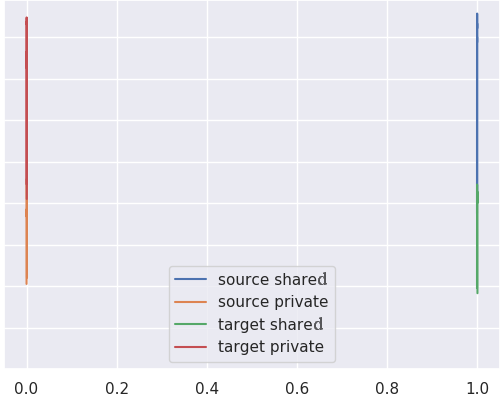}
			\label{ideal}
		\end{minipage}}
	\subfigure[UAN]{
		\begin{minipage}[c]{0.23\textwidth}
			\centering
			\includegraphics[width=1\linewidth]{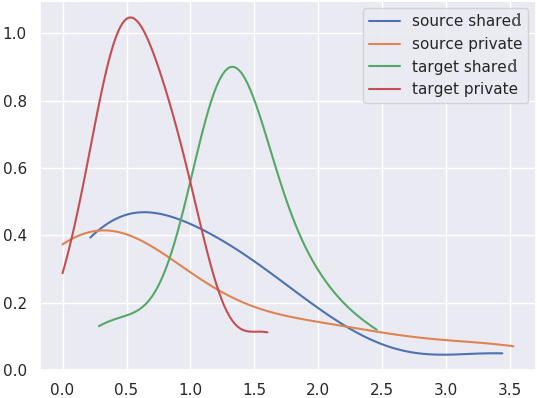}
			\label{w-UAN}
		\end{minipage}}
	\subfigure[Ours]{
		\begin{minipage}[c]{0.23\textwidth}
			\centering
			\includegraphics[width=1\linewidth]{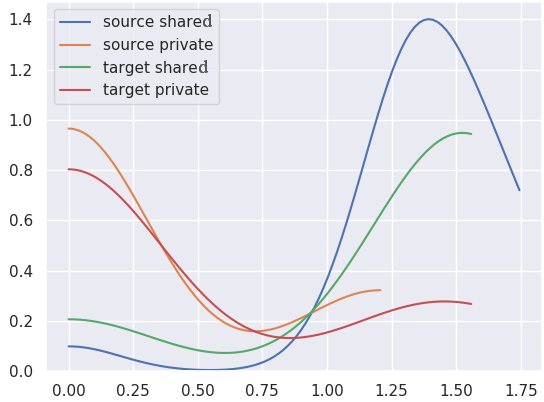}
			\label{w-ours}
		\end{minipage}}
	\caption{\textbf{Probability density distribution} of $w^s(\mathbf{x})$ $($or $w^t(\mathbf{x})$$)$ on the following groups: (1) 'source shared': source samples in the common label set (blue); (2) 'source private': source samples in the private label set (orange); (3) 'target shared': target samples in the common label set (green); (4) 'target private': target samples in the private label set (red).}
\end{figure}

\begin{figure}
	\centering
	\subfigure[High cost]{
		\begin{minipage}[c]{0.23\textwidth}
			\centering
			\includegraphics[width=0.9\linewidth]{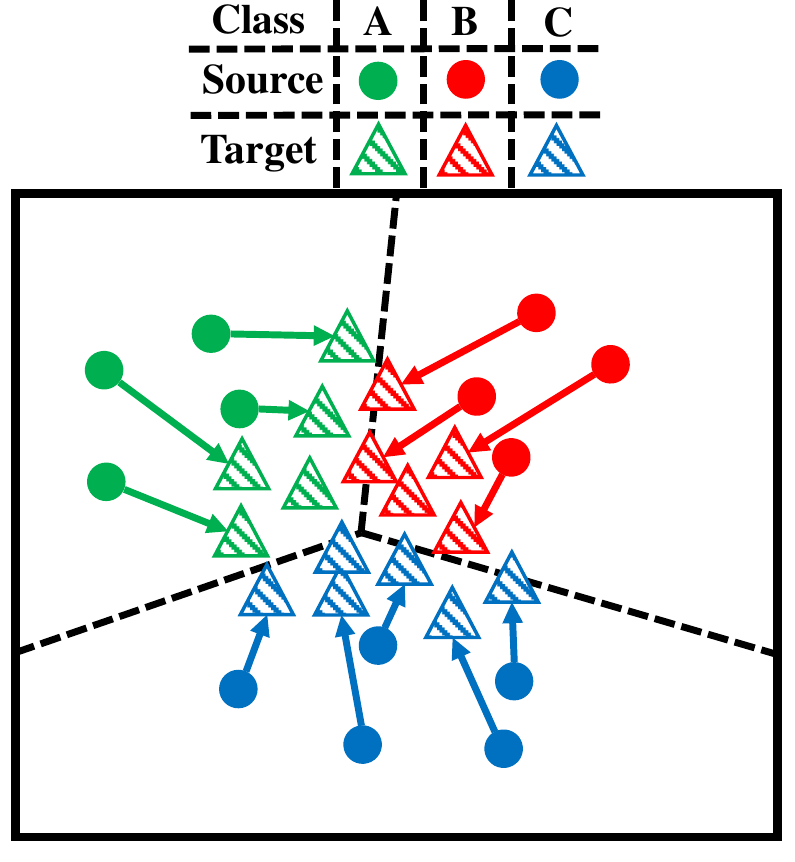}
			\label{highcost}
		\end{minipage}}
	\subfigure[Low cost]{
		\begin{minipage}[c]{0.23\textwidth}
			\centering
			\includegraphics[width=0.9\linewidth]{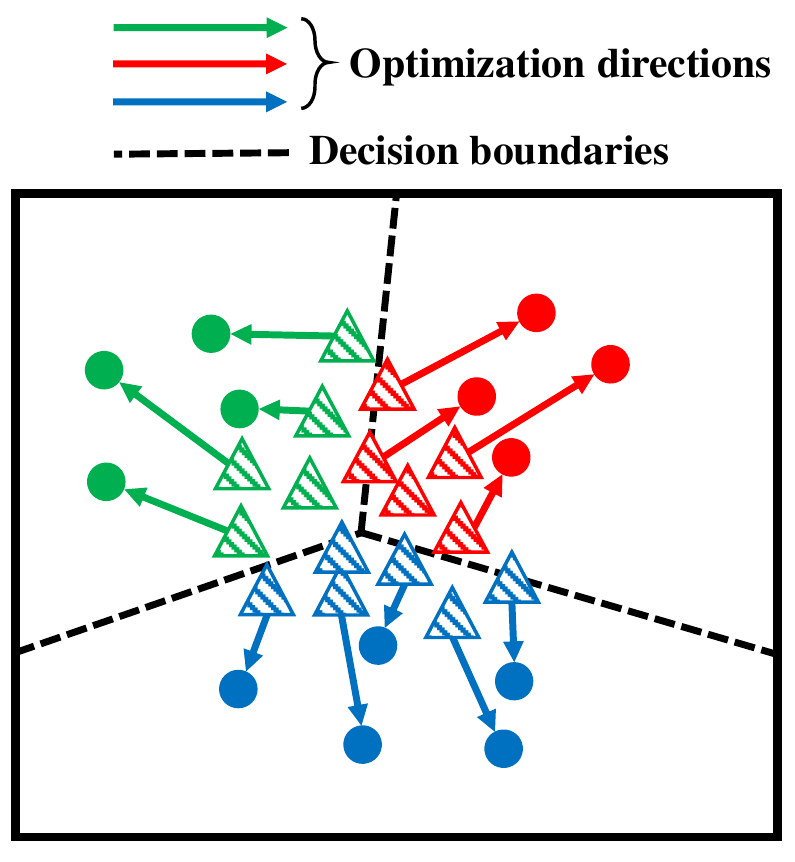}
			\label{lowcost}
		\end{minipage}}
	\caption{Two kinds of \textbf{behaviors} of a feature extractor when minimizing source label loss and maximizing domain adversarial loss simultaneously.}
	\label{behavior}
\end{figure}

According to both theoretical analysis and experimental results, the prediction entropy of samples from the source domain is not discriminative enough to find out the common label set. \textbf{Essentially}, it is the \textbf{target data} that determines the common and private classes in the \textbf{source domain}. In this paper, we propose a novel approach without use of the additional non-adversarial domain classifier. Instead, we exploit the class-labeling structure between source and target domains, and we propose a simple but effective way to distinguish between common and private classes in the source domain: the prediction margin of target samples as illustrated in Fig. \ref{motivation}. The prediction margin is defined as the largest prediction probability minus the second largest prediction probability as shown in Fig. \ref{motivation}. This margin measures the confidence in assigning a target sample to its pseudo-label, which has the largest prediction probability. \textbf{From another perspective}, when one class is predicted as the pseudo-label often with a large margin, this class is considered common in two domains. On the contrary, when one class is predicted with a small margin, this class is considered private to the target domain.

\begin{figure}
	\centering
	\includegraphics[width=1\linewidth]{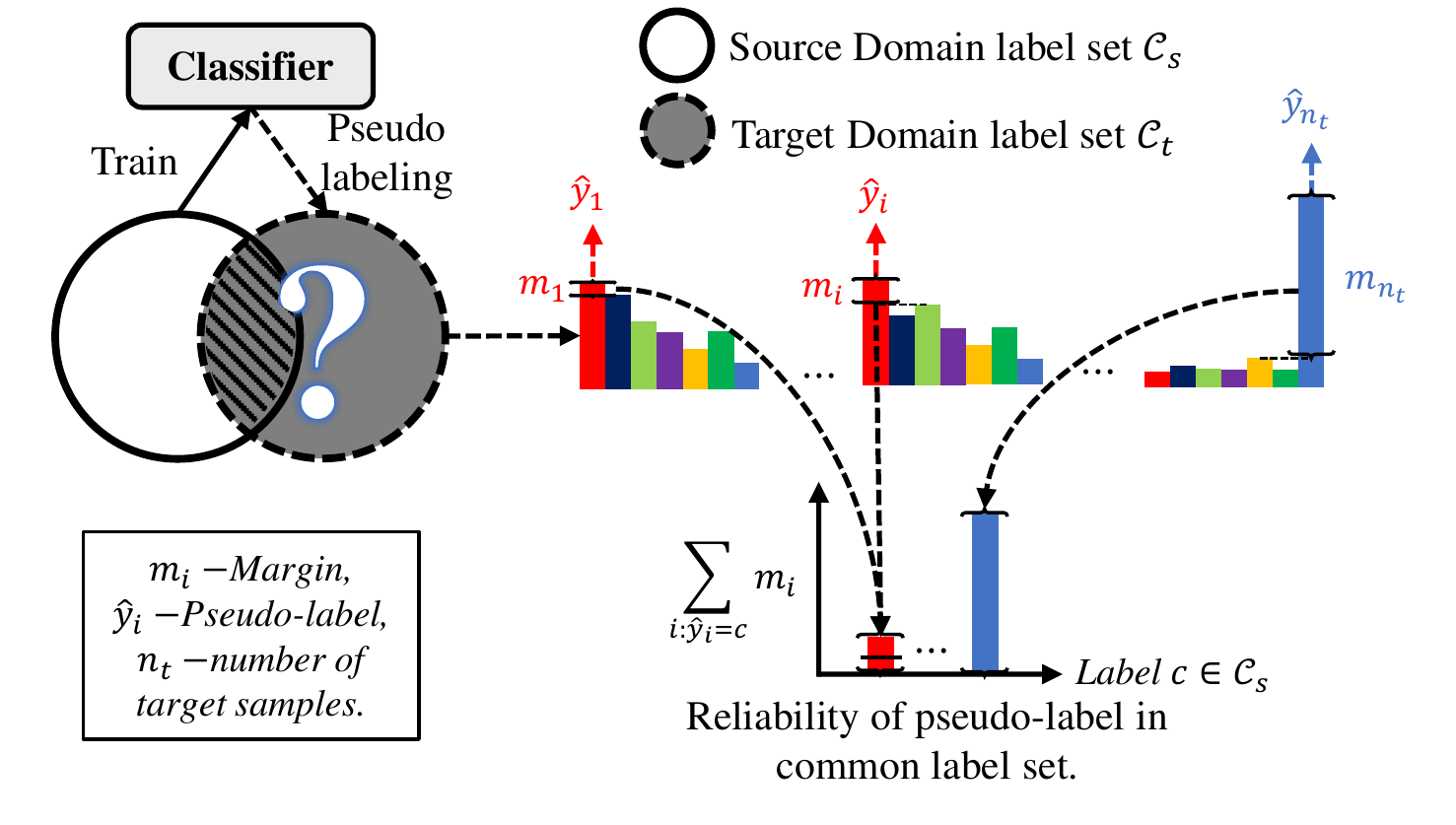}
	\caption{An illustration of Class-Labeling Structure. We employ the prediction margin as the reliability of pseudo-label in common label set. }
	\label{motivation}
\end{figure}

Based on this hypothesis, we propose a \emph{margin vector} to average the empirical margin along each pseudo-class. Each dimension of the \emph{margin vector} is the empirical margin averaged over the target samples with the same pseudo-label. Through \emph{margin vector}, one can easily evaluate the reliability of a known class appearing in the current data distribution. Specifically, in UDA setting, \emph{margin vector} integrates the reliability of each known class belonging to the common label set. These reliabilities can be used to weight source classes when implementing domain adversarial training. Experimental results show that this class-wise weighting mechanism is more practical and effective than the sample-wise weighting mechanism in \cite{you2019universal}. \textbf{Actually}, samples in the same class should share a common weight, since the probability of a class belonging to the common label set can be estimated.

The empirical margin can only be calculated after all the target data have been predicted. To this end, we propose a \emph{target margin register} (TMR) that allows the \emph{margin vector} to be updated during batch iteration. After an epoch of training, the \emph{margin vector} stored in the TMR is equal to the empirical \emph{margin vector} of all the target data. Due to lack of labeling for target samples, we use the highest probability of prediction to weight target samples. The distribution of our weighting mechanism is shown in Fig. \ref{w-ours}, where samples in the common and private label set are separated considerably.

The main contributions of this paper can be summarized as:
\begin{enumerate}
\item We propose a novel approach for evaluating the probability of each source class as from the common label set, where this probability is computed by the prediction margin accumulated over the whole target domain. This weighting mechanism requires no additional non-adversarial domain classifier, but estimates common label set efficiently.
\item We propose a simple universal adaptation network (S-UAN) containing only a feature extractor, a classifier, a domain classifier and two losses, but achieves a widely improvement on four benchmarks compared with the state-of-the-art domain adaptation methods.
\item We derive the generalization bound for UDA focusing on the common label set, and verify the properties of the generalization bound by analyzing experimental target risk.
\end{enumerate}

\section{Our Approach}
The main contribution of our work is a simple but effective solution of universal domain adaptation, where new, unknown classes can be present and some known classes may not exist in the target domain. We propose a simple universal adaptation network to solve this problem. Meanwhile, we analyse the generalization bound for UDA focusing on the common label set.

\subsection{Symbolization}
In typical domain adaptation scenario, one source domain $\mathcal{D}_{s}=\left\{\left(\mathbf{x}_{i}^{s}, \mathbf{y}_{i}^{s}\right)\right\}$ consists of $n_s$ labeled samples and one target domain $\mathcal{D}_{t}=\left\{\left(\mathbf{x}_{i}^{t}\right)\right\}$ consists of $n_t$ unlabeled samples are available. The source samples are drawn from the source distribution $p$ and the target samples from the target distribution $q$. Let $\mathcal{C}_s$ denotes the source label set and $\mathcal{C}_t$ denotes the target label set. $\mathcal{C}=\mathcal{C}_{s} \cap \mathcal{C}_{t}$ is the common label set of two domains. $\overline{\mathcal{C}}_{s}=\mathcal{C}_{s} \setminus \mathcal{C}$ and $\overline{\mathcal{C}}_{t}=\mathcal{C}_{t} \setminus \mathcal{C}$ is the label set private to the source and target domain respectively. $p_{\mathcal{C}_{s}}$, $p_{\mathcal{C}}$ and $p_{\overline{\mathcal{C}}_s}$ denote the source distribution in $\mathcal{C}_{s}$, $\mathcal{C}$ and $\overline{\mathcal{C}}_s$ respectively. Similarly, $q_{\mathcal{C}_{t}}$, $q_{\mathcal{C}}$, $q_{\overline{\mathcal{C}}_t}$ denote the target distributions in $\mathcal{C}_{t}$, $\mathcal{C}$, $\overline{\mathcal{C}}_t$ respectively. Actually, the target domain is unlabeled, and target labels are unavailable during training. The Jaccard distance \cite{you2019universal} $\xi=\frac{\left|\mathcal{C}_{s} \cap \mathcal{C}_{t}\right|}{\left|\mathcal{C}_{s} \cup \mathcal{C}_{t}\right|}$ is used to define a specific UDA scenario, where $\xi$ can be any rational number between 0 and 1. When $\xi=1$, UDA degenerates to closed set domain adaptation. A learning model should be designed to work stably with different $\xi$ even if $\xi$ is unknown. Both domain gap and category gap exists in UDA scenario, i.e. $p \neq q$ and $p_{\mathcal{C}} \neq q_{\mathcal{C}}$. The main task for UDA is to eliminate the impact of $p_{\mathcal{C}}\neq q_{\mathcal{C}}$. Meanwhile, the learning model should distinguish between target samples coming from known classes in $\mathcal{C}$ and unknown classes in $\overline{C}_{t}$. Finally, the model should be learned to minimize the target risk in $\mathcal{C}$. Additionally, we derive the theoretical generalization bound of the target risk in $\mathcal{C}$ in Section \ref{Sbound}.

\begin{figure*}[htbp]
	\centering
	\includegraphics[height=2.6in,width=0.8\linewidth]{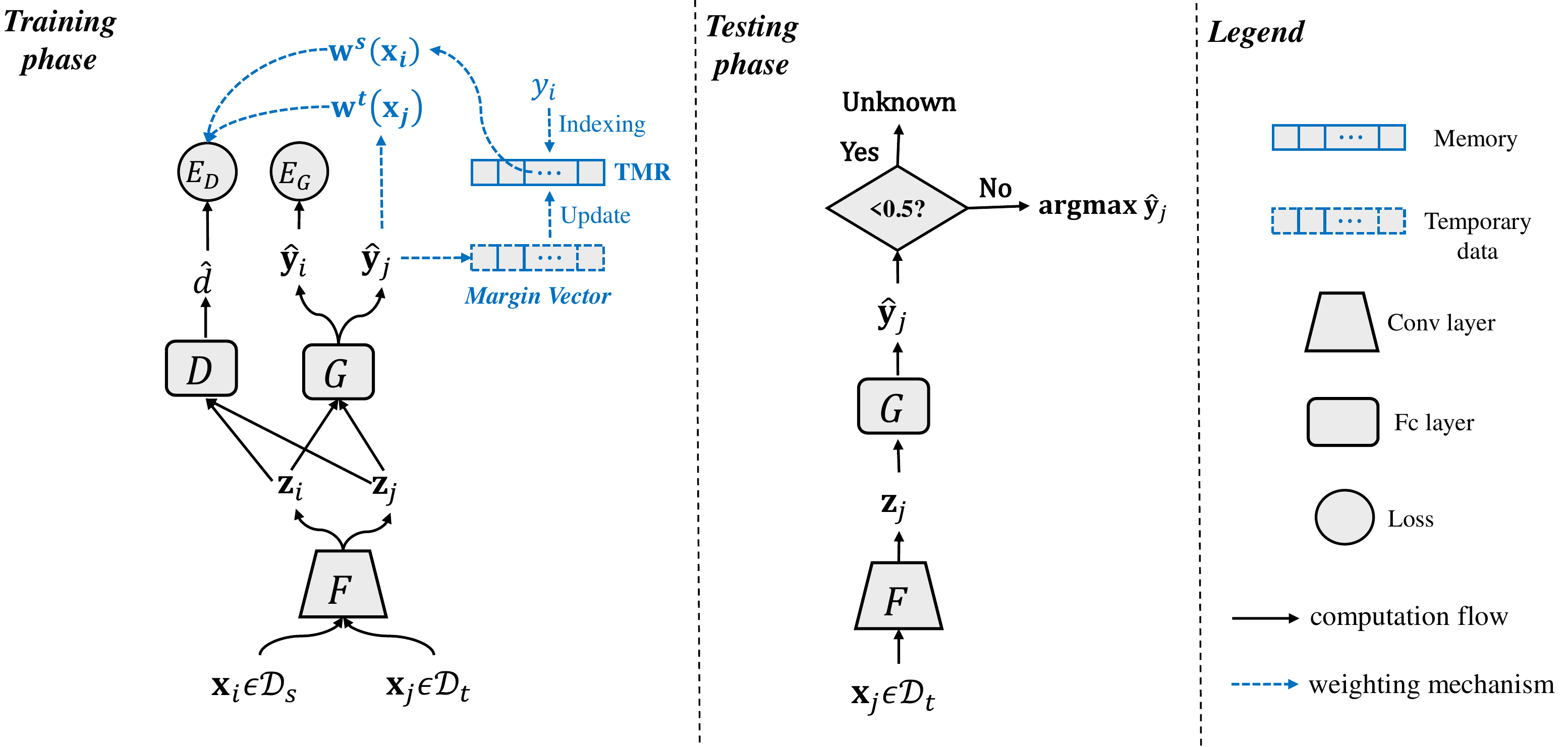}
	\caption{The training and testing phases of proposed \textbf{S-UAN}.}
	\label{model}
\end{figure*}

\subsection{Margin Vector}
We first introduce a class-wise weighting mechanism guaranteed by margin theory to UDA. The margin between features and the classification surface makes an important impact on achieving generalizable classifier. Therefore a margin theory for classification was explored by \cite{koltchinskii2002empirical,zhang2019bridging}, where the 0-1 loss for classification is replaced by the marginal loss. Here, in an \textbf{unsupervised} manner, we define the \emph{margin} of a hypothesis predictor (a classifier) $f$ at a pseudo-labeled example $\mathbf{x}$ as
\begin{equation}
\begin{aligned}
m_{f}(\mathbf{x}) &\triangleq f(\mathbf{x},\hat{y})-\max _{y \neq \hat{y}} f\left(\mathbf{x}, y\right),
\label{margin}
\end{aligned}
\end{equation}
\begin{equation}
\begin{aligned}
\hat{y}&\triangleq\mathop{\arg\max} _{y}f\left(\mathbf{x}, y\right),
\label{pseudo}
\end{aligned}
\end{equation}where $f\left(\mathbf{x}, y\right)$ is the classification probability that $\mathbf{x}$ belongs to the $y$-th class. This \emph{margin} measures the confidence in assigning an example to its pseudo-label. In particular, wrong classified samples and samples of unknown classes often have a small margin, where the classification surface intersects here.

Then, the empirical \emph{margin vector} over a data distribution $\mathcal{D}$ is defined as
\begin{equation}
\begin{aligned}
\mathcal{M}(\mathcal{D},f)\triangleq\mathbb{E}_{\mathbf{x}\in\mathcal{D}}\left[m_{f}(\{\mathbf{x}|\hat{y}=1\}),\cdots,m_{f}(\{\mathbf{x}|\hat{y}=n_f\})\right]^T,
\label{mv}
\end{aligned}
\end{equation}where $n_f$ is the number of classes defined by $f$. In this definition, the $i$-th dimension of the \emph{margin vector} is the empirical \emph{margin} of samples with the $i$-th pseudo-label.

Firstly, \emph{margin vector} can be used to find common label set for partial domain adaptation, where some of source classes do not exist in the target domain. More generally, in UDA, \emph{margin vector} can be used to extract $p_\mathcal{C}$ from $p$ by weighting source classes.

\subsection{Target Margin Register}
We first introduce a target margin register (TMR) to UDA. The TMR is formulated as a $\mathcal{C}_{s}$-dimensional vector $\mathbf{V}_{TMR}$, which is updated in each training step. The updated rule is defined as
\begin{equation}
\begin{aligned} 
\mathbf{V}^{t+1}_{TMR}=\frac{1}{t+1}(t\times \mathbf{V}^{t}_{TMR}+ \mathcal{M}(\mathcal{D}^{b}_{t},f)),
\label{updated rule}
\end{aligned}
\end{equation}
\begin{equation}
\begin{aligned} 
\mathbf{V}^{0}_{TMR}={\underbrace{[0,0,\cdots,0]}_{|\mathcal{C}_{s}|}}^T,
\label{init}
\end{aligned}
\end{equation}where $\mathcal{M}$ is the margin function defined in Eq. \ref{mv}, which outputs a $\mathcal{C}_{s}$-dimensional \emph{margin vector}. Note that $t\times \mathbf{V}^{t}_{TMR}$ is the accumulated \emph{margin vector} over previous $t$ steps. $\mathcal{D}^{b}_{t}$ denotes a batch of data sampled from $\mathcal{D}_{t}$, $f$ denotes the prediction function of the classifier, $t$ is the updating step and $t<T$, where $T$ is the maximum step set before training. $\mathbf{V}^{t}_{TMR}$ denotes the stored TMR-vector at step $t$. In Eq. \ref{updated rule}, the first term is equal to the accumulated prediction over the previous steps, the second term is the \emph{margin vector} of $\mathcal{D}^{b}_{t}$ and $f$ in the current training step, and the whole definition is equal to calculate the empirical \emph{margin vector} of all the trained batches of target samples. In S-UAN algorithn, we update $\mathbf{V}_{TMR}$ only when the source error lower than $\epsilon$ as shown in Algorithm \ref{alg}.

Each dimension of $\mathbf{V}_{TMR}$ represents the empirical \emph{margin} of the target samples predicted as corresponding source class. This margin can be directly used to weight a source sample $\mathbf{x}_{i} \in \mathcal{D}_{s}$ with its label $y_i$:
\begin{equation}
\begin{aligned}
w^s(\mathbf{x}_i)=\mathbf{V}_{TMR}[y_i],
\label{wsv}
\end{aligned}
\end{equation}where $w^s(\mathbf{x}_i)$ indicates the probability that a source sample $\mathbf{x}_i$ belongs to $\mathcal{C}$, and $\mathbf{V}_{TMR}[y_i]$ is the $y_i$-th dimension of $\mathbf{V}_{TMR}$. Unlike \cite{you2019universal}, we define the weighting mechanism in terms of class rather than individual sample. \textbf{Essentially}, samples in the same class should share a common weight, when the probability of a class belonging to $\mathcal{C}$ can be estimated. 

Note that $w^s(\mathbf{x})$ $($or $w^t(\mathbf{x})$$)$ should be further normalized as \cite{you2019universal}. In this paper, we use a modified normalization approach as 
\begin{equation}
\begin{aligned}
\overline{w}(\mathbf{x})=\frac{w(\mathbf{x})-\min_{\mathbf{x}\in \mathcal{D}^{b}} w(\mathbf{x})}{\max_{\mathbf{x}\in \mathcal{D}^{b}} w(\mathbf{x})-\min_{\mathbf{x}\in \mathcal{D}^{b}} w(\mathbf{x})},
\\
w(\mathbf{x})\leftarrow\max\left(\frac{b}{\sum_{\mathbf{x} \in \mathcal{D}^{b}}w(\mathbf{x})}\cdot \overline{w}(\mathbf{x})-w_0,0\right),
\label{norm}
\end{aligned}
\end{equation}where $\overline{w}(\mathbf{x})$ is the normalized $w(\mathbf{x})$, $d$ is the batch size, and $w_0\in\{0,1\}$ is the activation threshold, which is the only hyperparameter in our approach. Here, we suggest setting $w_0=1$ if $\xi=\frac{\left|\mathcal{C}_{s} \cap \mathcal{C}_{t}\right|}{\left|\mathcal{C}_{s} \cup \mathcal{C}_{t}\right|}$ is relatively large. And we analyze the hyperparameter $w_0$ in Section \ref{Analysis of S-UAN}.

\subsection{Simple Universal Adaptation Network}
The main challenge of UDA is to reduce the impact of $p_\mathcal{C}\neq q_\mathcal{C}$, which called domain shift in $\mathcal{C}$. To this end, let $F: \mathbb{R}^{l\times w}\rightarrow \mathbb{R}^d$ be the feature extractor, and $G: \mathbb{R}^d\rightarrow \{0,1,\cdots,|\mathcal{C}_{s}|-1\}$ be the classifier. Here, $l\times w$ and $d$ denote the dimension of input images and extracted features respectively. Input $\mathbf{x}$ from both domains are forwarded into $F$ to obtain extracted feature $\mathbf{z}=F(\mathbf{x})$, then, $\mathbf{z}$ is L2-normalized and fed into $G$ to estimate the classification probability $G^{c}(\mathbf{z})$ of $\mathbf{x}$ over the $c$-th class in $\mathcal{C}_s$. The probabilities of source input $\mathbf{x}_i$ are corrected by cross-entropy loss $\mathcal{L}_{cls}$ with source labels, whereas the highest probability of target input $\mathbf{x}_j$ can be used to calculate $w^t(\mathbf{x}_j)$ \cite{liu2019separate}, which are defined as
\begin{equation}
\begin{aligned}
E_{G}=\mathbb{E}_{(\mathbf{x}, \mathbf{y}) \sim p} L\left(\mathbf{y}, G\left(F(\mathbf{x})\right)\right),
\label{Lc}
\end{aligned}
\end{equation}
\begin{equation}
\begin{aligned}
w^t(\mathbf{x}_j)=\max _{c \in \mathcal{C}_{s}} G^c\left(F(\mathbf{x}_{j})\right),
\label{wt}
\end{aligned}
\end{equation}where $L$ is cross-entropy loss. Let $D: \mathbb{R}^d\rightarrow \{0,1\}$ be the binary classifier discriminating input feature $\mathbf{z}$ from $\mathcal{D}_s$ or $\mathcal{D}_t$. Inspired by \cite{you2019universal}, the domain classifier $D$ can be trained by weighting $w^s(\mathbf{x}_i)$ and $w^t(\mathbf{x}_j)$ as
\begin{equation}
\begin{aligned} 
E_{D}=&-\mathbb{E}_{\mathbf{x} \sim p} w^{s}(\mathbf{x}) \log D\left(F(\mathbf{x})\right)
\\ &-\mathbb{E}_{\mathbf{x} \sim q} w^{t}(\mathbf{x}) \log \left(1-D\left(F(\mathbf{x})\right)\right),
\label{Ld}
\end{aligned}
\end{equation}note that $w^{s}(\mathbf{x})$ and $w^{t}(\mathbf{x})$ are normalized as defined in Eq. \ref{norm}. With such definition, the domain alignment can be implemented in $\mathcal{C}$ as precisely as possible.

\textbf{Optimization.} The training of the whole model can be summarized as
\begin{equation}
\begin{aligned}
\max _D \min _{F, G} E_{G}-E_{D}.
\label{optimization}
\end{aligned}
\end{equation}

This optimization objective is extremely simple but particularly effective. We use the gradient reversal layer \cite{ganin2016domain} to reverse the gradient between $F$ and $D$, this allows the optimization of all modules in an end-to-end way. 

\textbf{Inference.} Finally, a target test sample $\mathbf{x}$ is assigned to either corresponding known class or the unknown class as
\begin{equation}
\begin{aligned}
y(\mathbf{x})=\left\{\begin{array}{ll}
{\arg \max}_{c\in \mathcal{C}_s} G^{c}(F(\mathbf{x})), & w^{t}(\mathbf{x})\geq 0.5 \\
\text {unknown}, & \text { otherwise }.
\end{array}\right.
\label{inference}
\end{aligned}
\end{equation}

\begin{algorithm}[h]
\caption{S-UAN Algorithm}
\label{alg}
\LinesNumbered
\KwIn{$T$: max iteration\;
$\mathcal{D}_s=\{\textbf{x}_{i},y_i\}_{i=1}^{n_s}$: source training data distribution\; 
$\mathcal{D}_t=\{ \textbf{x}_{j}\}_{j=1}^{n_t}$: target training data distribution\; 
$F$: pretrained feature extractor parameterized by $\theta _f$\; 
$G$: randomly initialized classifier parameterized by $\theta _g$\; 
$D$: randomly initialized domain classifier parameterized by $\theta _d$\; 
}
\KwOut{$\theta_f $, $\theta _g $ and $\theta _d$.}
\textbf{Training}:\\
set $t=0,\epsilon=0.1$ and initialize $V_{TMR}=V^0_{TMR}$\;
\While{$t<T$}{
\For{each batch $(\mathcal{D}^{b}_{s}, \mathcal{D}^{b}_{t})$ in $(\mathcal{D}_s, \mathcal{D}_t)$}{
	obtain extracted features on both domain: $\textbf{z}_i=F(\textbf{x}_i)$ and $\textbf{z}_j=F(\textbf{x}_j)$\;
	obtain classification results using softmax: $\hat{y}_i=G(\textbf{z}_i)$ and $\hat{y}_j=G(\textbf{z}_j)$\;
	calculate the magin $m_f(\textbf{x}_j)$ by Eq. \ref{margin} and \ref{pseudo}\;
	obtain the \emph{margin vector} $\mathcal{M}(\mathcal{D}^{b}_{t},F)$ by Eq. \ref{mv}\;
	calculate source error ($\mathbf{1}\{\cdot\}=1$ if $\{\cdot\}$ is true): $error(\mathcal{D}_{s})=\frac{1}{n_s}\sum_{i=1}^{n_s}\mathbf{1}\{G(F(\mathbf{x}_i))\neq y_i\}$\;
	\If{$error(\mathcal{D}_{s})<\epsilon$}{update TMR by Eq. \ref{updated rule}\;}
	weight $\textbf{x}_i$ using TMR-vector by Eq. \ref{wsv} and normalize by Eq. \ref{norm}\;
	weight $\textbf{x}_j$ using $\max \hat{y}_j$ by Eq. \ref{wt} and normalize by Eq. \ref{norm}\;
	calculate $\mathcal{L}_{cls}$ by Eq. \ref{Lc} and $\mathcal{L}_{d}$ by Eq. \ref{Ld}\;
	update $\theta_f $, $\theta _g $ and $\theta _d$ by optimizing Eq. \ref{optimization} using statistical gradient descent\;
	let $t\leftarrow t+1$\;}}
\textbf{Testing}:\\
Test data $\textbf{x}$ is forwarded to obtain $G^{c\in\mathcal{C}_s}(F(\textbf{x}))$\;
\If{$\max G^{c\in\mathcal{C}_s}(F(\textbf{x}))\geq 0.5$}{
	label $\textbf{x}$ with $y=\arg\max_c G^{c\in\mathcal{C}_s}(F(\textbf{x}))$\;
\Else{reject $\textbf{x}$ as an unknown class\;}}
\end{algorithm}

Unlike \cite{you2019universal}, we set the threshold as 0.5 in all the experiments. The decision condition $w^{t}(\mathbf{x}_j)\geq 0.5$ is equovlent to $\max_y f(\mathbf{x}_j,y)\geq 0.5$ according to Eq. \ref{wt}. The whole model is shown in Fig. \ref{model} and the algorithm of S-UAN is shown in Algorithm \ref{alg}.

\subsection{Generalization Bound Analysis for Universal Domain Adaptation}
\label{Sbound}
Let $\alpha=\frac{|\mathcal{C}|}{|\mathcal{C}_s|}$ and $\beta=\frac{|\mathcal{C}|}{|\mathcal{C}_t|}$ be the proportion of common classes in $\mathcal{C}_s$ and $\mathcal{C}_t$ respectively, the Jaccard distance $\xi$ can be computed as
\begin{equation}
\begin{aligned}
\xi=\frac{\alpha\beta}{(1-\alpha)(\alpha+\beta)+\alpha}.
\end{aligned}
\end{equation}

Suppose that $\mathcal{D}_{s}$ is class-balanced and the sample size of each class in $\mathcal{D}_{t}$ does not vary widely, with a large probability, the number of samples in the common label set is $\beta m$ for $m$ randomly-picked samples from $\mathcal{D}_{t}$.

\begin{lemma}
Suppose that both $\mathcal{D}_{s}$ and $\mathcal{D}_{t}$ are class-balanced. By randomly selecting the same number $m$ of samples from class-balanced $\mathcal{D}_{s}$ and $\mathcal{D}_{t}$, theorem 3.4 in \cite{KiferBG04} can be modified in the common label set:
\begin{equation}
\begin{aligned}
\begin{array}{l}
P^{(\alpha+\beta)m}\left[\left|\phi_{\mathcal{A}}\left(\mathcal{D}^m_{s}, \mathcal{D}^m_{t}\right)-\phi_{\mathcal{A}}\left(p_{\mathcal{C}}, q_{\mathcal{C}}\right)\right|>\epsilon\right] \\
\leq\left(2\alpha m\right)^{d} e^{-\alpha m \epsilon^{2} / 16}+\left(2 \beta m\right)^{d} e^{-\beta m \epsilon^{2} / 16},
\end{array}
\label{data to distribution}
\end{aligned}
\end{equation}where $P^{(\alpha+\beta)m}$ is the $(\alpha+\beta)m$ power of $P$, and it is the probability that $P$ induces over the choice of samples. $\mathcal{A}$ is a collection of subsets of some domain measure space $\phi$, i.e. $\mathcal{A}=\{\phi_\mathcal{A}:\ \phi_\mathcal{A}\in\phi\}$, and assume that the VC dimension of $\phi$ is some finite $d$. $\mathcal{D}^m_{s}$ and $\mathcal{D}^m_{t}$ are $m$ samples drawn i.i.d. from class-balanced $\mathcal{D}_{s}$ and $\mathcal{D}_{t}$.\label{lemma1}\end{lemma}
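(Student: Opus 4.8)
The plan is to reduce the claimed inequality to the original Theorem 3.4 of \cite{KiferBG04}, which bounds the deviation between the empirical and population $\mathcal{A}$-distance when $m'$ i.i.d.\ samples are drawn from each of two distributions. The key observation is a conditioning/restriction argument: if we draw $(\alpha+\beta)m$ samples whose labels happen to realize the class-balanced structure described before the lemma, then among the source draws exactly $\alpha m$ of them fall in the common label set $\mathcal{C}$, and among the target draws exactly $\beta m$ fall in $\mathcal{C}$. First I would argue that, conditioned on this event, the $\alpha m$ common-class source samples are i.i.d.\ draws from $p_{\mathcal{C}}$ and the $\beta m$ common-class target samples are i.i.d.\ draws from $q_{\mathcal{C}}$ — this is just the fact that conditioning a class-balanced mixture on "the label lies in $\mathcal{C}$" yields the restricted distribution $p_{\mathcal{C}}$ (resp.\ $q_{\mathcal{C}}$), and the $\phi_{\mathcal{A}}$ functional only sees the covariates restricted to $\mathcal{C}$.

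Next I would invoke Theorem 3.4 of \cite{KiferBG04} twice, or rather once with asymmetric sample sizes. That theorem, for $n$ samples from the first distribution and $n'$ from the second, gives a bound of the form $(2n)^{d}e^{-n\epsilon^2/16} + (2n')^{d}e^{-n'\epsilon^2/16}$ on $P[\,|\phi_{\mathcal{A}}(\text{empirical}) - \phi_{\mathcal{A}}(\text{population})| > \epsilon\,]$, where $d$ is the VC dimension of $\phi$. Substituting $n = \alpha m$ (the effective common-class source sample count) and $n' = \beta m$ (the effective common-class target sample count) produces exactly the right-hand side $(2\alpha m)^{d} e^{-\alpha m\epsilon^2/16} + (2\beta m)^{d} e^{-\beta m\epsilon^2/16}$. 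The empirical quantity is then $\phi_{\mathcal{A}}(\mathcal{D}^m_s, \mathcal{D}^m_t)$ restricted to its $\mathcal{C}$-components, and the population quantity is $\phi_{\mathcal{A}}(p_{\mathcal{C}}, q_{\mathcal{C}})$, which is what appears in the statement.

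Finally I would remove the conditioning: the claim in the lemma is stated for the samples that actually land in $\mathcal{C}$ (it writes $\mathcal{D}^m_s, \mathcal{D}^m_t$ for "$m$ samples drawn from class-balanced $\mathcal{D}_s$ and $\mathcal{D}_t$" but the relevant count in the exponents is $\alpha m$ and $\beta m$), so the bound holds pointwise over the relevant configuration and hence after averaging; the "with a large probability, the number of common-label samples is $\beta m$" remark preceding the lemma is what justifies treating $\alpha m, \beta m$ as the true counts rather than binomial random variables. I expect the main obstacle to be bookkeeping rather than mathematics: being careful that the VC dimension $d$ of the set system $\phi$ does not change when we pass from the full covariate space to its restriction over $\mathcal{C}$ (it can only decrease, so using $d$ is safe), and making the conditioning-vs-high-probability step precise enough that the binomial fluctuations in the per-class counts are absorbed into the "large probability" qualifier without an extra error term. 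A secondary subtlety is confirming that the asymmetric-sample-size version of \cite{KiferBG04}'s Theorem 3.4 is exactly the additive two-term bound quoted; if the cited theorem is stated only for equal sample sizes, I would instead apply it with $m' = \min(\alpha m, \beta m)$ and then note monotonicity, or re-run its Hoeffding/union-bound proof with unequal $n, n'$, which is routine.
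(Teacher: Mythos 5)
Your proposal is correct and follows essentially the same route the paper intends: the paper gives no separate proof of Lemma \ref{lemma1}, relying exactly on the preceding remark that under class balance $\alpha m$ of the source draws and $\beta m$ of the target draws land in $\mathcal{C}$ and are i.i.d.\ from $p_{\mathcal{C}}$ and $q_{\mathcal{C}}$, so Theorem 3.4 of the cited reference (which is already stated with asymmetric sample sizes) applies directly with effective sizes $\alpha m$ and $\beta m$. Your additional care about the binomial fluctuations in the per-class counts and about the VC dimension not increasing under restriction goes beyond what the paper records, but the underlying argument is the same.
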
 Eq.\ref{data to distribution} bounds the maximum probability that, the error of measuring domain discrepancy over sampled data $\mathcal{D}^m_{s}$ and $\mathcal{D}^m_{t}$ or true distribution $p_{\mathcal{C}}$ and $q_{\mathcal{C}}$, is more than $\epsilon$. 

\begin{lemma}
Let $\gamma=\frac{\beta}{\alpha}=\frac{|\mathcal{C}_s|}{|\mathcal{C}_t|}$, $m'=\alpha m$ and $\phi_\mathcal{A}=d_{\mathcal{H} \Delta \mathcal{H}}$, which is well defined in \cite{BlitzerCKPW07}. Based on Lemma \ref{lemma1}, for any $\delta \in(0,1),$ with probability at least $1-\delta$,
\begin{equation}
\begin{aligned}
d_{\mathcal{H}\Delta \mathcal{H}}\left(p_{\mathcal{C}}, q_{\mathcal{C}}\right) &\leq \hat{d}_{\mathcal{H}\Delta \mathcal{H}}\left(\mathcal{D}_{s}, \mathcal{D}_{t}\right)
\\&+4 \sqrt{\frac{d \log (2\max\{1,\gamma\}m')+\log \left(\frac{2}{\delta}\right)}{\max\{1,\gamma\}m'}}.
\end{aligned}
\end{equation}\label{data bound}
\end{lemma}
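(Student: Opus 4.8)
The plan is to convert the two-sided concentration inequality of Lemma~\ref{lemma1} into a one-sided, high-confidence deviation bound, following the route that takes one from the Kifer--Ben-David tail inequality to the finite-sample $d_{\mathcal{H}\Delta\mathcal{H}}$ estimate in \cite{BlitzerCKPW07}.

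First I would specialise Lemma~\ref{lemma1} to $\phi_{\mathcal{A}}=d_{\mathcal{H}\Delta\mathcal{H}}$. This is legitimate: $\mathcal{H}\Delta\mathcal{H}$ is a class of measurable subsets of finite VC dimension $d$, so it plays the role of the collection $\mathcal{A}$, with $\hat d_{\mathcal{H}\Delta\mathcal{H}}(\mathcal{D}_s,\mathcal{D}_t)=\phi_{\mathcal{A}}(\mathcal{D}^m_s,\mathcal{D}^m_t)$ the empirical divergence on the drawn samples and $d_{\mathcal{H}\Delta\mathcal{H}}(p_{\mathcal{C}},q_{\mathcal{C}})=\phi_{\mathcal{A}}(p_{\mathcal{C}},q_{\mathcal{C}})$ its population counterpart on the common label set. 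Lemma~\ref{lemma1} then says that $P\bigl[|\hat d_{\mathcal{H}\Delta\mathcal{H}}(\mathcal{D}_s,\mathcal{D}_t)-d_{\mathcal{H}\Delta\mathcal{H}}(p_{\mathcal{C}},q_{\mathcal{C}})|>\epsilon\bigr]\le (2\alpha m)^{d}e^{-\alpha m\epsilon^{2}/16}+(2\beta m)^{d}e^{-\beta m\epsilon^{2}/16}$. Writing $m'=\alpha m$ and $\beta m=\gamma m'$, the right-hand side becomes $(2m')^{d}e^{-m'\epsilon^{2}/16}+(2\gamma m')^{d}e^{-\gamma m'\epsilon^{2}/16}$.

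Second I would collapse the two tail terms into a single one. Using the slower-decaying of the two exponentials together with the coarser polynomial prefactor, one upper bounds the sum by $2\,\bigl(2\max\{1,\gamma\}m'\bigr)^{d}\exp\!\bigl(-\max\{1,\gamma\}m'\,\epsilon^{2}/16\bigr)$, handling the cases $\gamma\ge 1$ and $\gamma<1$ separately so that the monotonicity of the prefactor (increasing in the sample count) and of the exponential (decreasing in the sample count) are invoked consistently. Setting this quantity equal to $\delta$, taking logarithms, and solving the resulting quadratic in $\epsilon$ yields $\epsilon=4\sqrt{\bigl(d\log(2\max\{1,\gamma\}m')+\log(2/\delta)\bigr)/(\max\{1,\gamma\}m')}$.

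Finally I would discard the lower tail: the event $\{d_{\mathcal{H}\Delta\mathcal{H}}(p_{\mathcal{C}},q_{\mathcal{C}})-\hat d_{\mathcal{H}\Delta\mathcal{H}}(\mathcal{D}_s,\mathcal{D}_t)>\epsilon\}$ is contained in the two-sided event already bounded, hence has probability at most $\delta$; on the complementary event, of probability at least $1-\delta$, the asserted inequality holds with this $\epsilon$. I expect the only genuinely non-mechanical step to be the second one: folding the asymmetric pair of tail terms, which carry the two distinct effective common-class sample sizes $\alpha m$ and $\beta m$, into one clean rate, and in particular verifying that the prefactor-versus-exponent monotonicity is used in the direction that actually makes $\max\{1,\gamma\}m'$ a valid upper bound for the sum; the remaining steps are the routine ``equate the failure probability to $\delta$ and invert'' manipulation.
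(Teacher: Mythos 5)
Your route is the same one the paper takes: specialise Lemma~\ref{lemma1} to $\phi_\mathcal{A}=d_{\mathcal{H}\Delta\mathcal{H}}$, rewrite the two tail terms via $m'=\alpha m$ and $\gamma$, collapse them into a single term, and invert for $\epsilon$. But the step you yourself flagged as the only non-mechanical one is exactly where the argument fails. Your verbal description (``the slower-decaying of the two exponentials together with the coarser polynomial prefactor'') would legitimately give $(2m')^{d}e^{-m'\epsilon^{2}/16}+(2\gamma m')^{d}e^{-\gamma m'\epsilon^{2}/16}\le 2\bigl(2\max\{1,\gamma\}m'\bigr)^{d}e^{-\min\{1,\gamma\}m'\epsilon^{2}/16}$, i.e.\ with $\min\{1,\gamma\}m'$ in the exponent; the formula you actually write down, and the $\epsilon$ you then solve for, put $\max\{1,\gamma\}m'$ in the exponent. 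That is not an upper bound for the sum: when $\gamma>1$ the first term alone, $(2m')^{d}e^{-m'\epsilon^{2}/16}$, eventually exceeds any constant multiple of $(2\gamma m')^{d}e^{-\gamma m'\epsilon^{2}/16}$, since an exponential with a smaller decay rate dominates a polynomial prefactor. Concretely, with $d=1$, $m'=10$, $\gamma=2$, $\epsilon=2$ the true sum is about $1.64+0.27=1.91$ while your collapsed expression equals $2\times 0.27\approx 0.54$; and $\epsilon=2$ is precisely the value obtained by setting that expression equal to $\delta=0.54$, so the failure occurs at the operating point, not in some irrelevant regime. The sound version of the step is to force each tail term below $\delta/2$ separately, which yields $\epsilon=4\sqrt{\bigl(d\log(2\min\{1,\gamma\}m')+\log(2/\delta)\bigr)/\bigl(\min\{1,\gamma\}m'\bigr)}$: the rate is governed by the smaller of the two effective common-class sample sizes $\alpha m$ and $\beta m$, not the larger, so the lemma as stated is too strong and your derivation does not establish it.

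For calibration against the source: the paper's own proof makes the identical move, and without even your factor of $2$ in front (so its claimed inequality already fails at $\gamma=1$, where it reduces to $2A\le A$); you have faithfully reproduced the paper's argument together with its flaw. One further divergence: the paper distinguishes the $m$-sample divergence $d_{\mathcal{H}\Delta\mathcal{H}}(\mathcal{D}^m_{s},\mathcal{D}^m_{t})$ from the full-dataset quantity $\hat{d}_{\mathcal{H}\Delta\mathcal{H}}(\mathcal{D}_{s},\mathcal{D}_{t})$ appearing in the statement and bridges them with an appeal to the strong law of large numbers, whereas you silently identify the two. Your reading is the cleaner one (the paper's limiting argument does not actually produce a one-sided finite-sample inequality), but the identification should be stated rather than assumed.
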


\begin{proof}
This proof relies on Lemma \ref{lemma1} [L\ref{lemma1}], $\gamma=\frac{\beta}{\alpha}$, $m'=\alpha m$ and $\phi_\mathcal{A}=d_{\mathcal{H} \Delta \mathcal{H}}$.
\begin{equation}\nonumber
\begin{aligned}
\begin{array}{l}
P^{(\alpha+\beta)m}\left[\left|d_{\mathcal{H} \Delta \mathcal{H}}\left(\mathcal{D}^m_{s}, \mathcal{D}^m_{t}\right)-d_{\mathcal{H} \Delta \mathcal{H}}\left(p_{\mathcal{C}}, q_{\mathcal{C}}\right)\right|>\epsilon\right] \\
\leq\left(2\alpha m\right)^{d} e^{-\alpha m \epsilon^{2} / 16}+\left(2 \beta m\right)^{d} e^{-\beta m \epsilon^{2} / 16}\qquad \text{[L\ref{lemma1}]}\\
=\left(2m'\right)^{d} e^{-m'\epsilon^{2} / 16}+\left(2 \gamma m'\right)^{d} e^{-\gamma m' \epsilon^{2} / 16}\\
\leq\left(2\max\{1,\gamma\}m'\right)^{d} e^{-\max\{1,\gamma\}m'\epsilon^{2} / 16},
\end{array}
\end{aligned}
\end{equation}let $\delta=\left(2\max\{1,\gamma\}m'\right)^{d} e^{-\max\{1,\gamma\}m'\epsilon^{2} / 16}$, we have 
\begin{equation}
\begin{aligned}
\epsilon=4 \sqrt{\frac{d \log (2\max\{1,\gamma\}m')+\log \left(\frac{2}{\delta}\right)}{\max\{1,\gamma\}m'}}, 
\end{aligned}
\end{equation}then with probability at least $1-\delta$,
\begin{equation}
\begin{aligned}
|d_{\mathcal{H} \Delta \mathcal{H}}(\mathcal{D}^m_{s}, \mathcal{D}^m_{t})&-d_{\mathcal{H} \Delta \mathcal{H}}(p_{\mathcal{C}}, q_{\mathcal{C}})|\\
&\leq4 \sqrt{\frac{d \log (2\max\{1,\gamma\}m')+\log \left(\frac{2}{\delta}\right)}{\max\{1,\gamma\}m'}},
\end{aligned}
\end{equation}and $d_{\mathcal{H} \Delta \mathcal{H}}(\mathcal{D}^m_{s}, \mathcal{D}^m_{t})$ can be bounded by the empirical $\hat{d}_{\mathcal{H}\Delta \mathcal{H}}\left(\mathcal{D}_{s}, \mathcal{D}_{t}\right)$ according to the strong law of large numbers \cite{Ochs77},
\begin{equation}
\begin{aligned}
\operatorname{Pr}\left(\lim _{n \rightarrow \infty} \frac{1}{n} \sum_{i=1}^{n}d_{\mathcal{H} \Delta \mathcal{H}}((\mathcal{D}^m_{s}, \mathcal{D}^m_{t})^n)=\hat{d}_{\mathcal{H}\Delta \mathcal{H}}\left(\mathcal{D}_{s}, \mathcal{D}_{t}\right)\right)=1
\end{aligned}
\end{equation}where $(\mathcal{D}^m_{s}, \mathcal{D}^m_{t})^n$ is the $n$-th randomly-picked samples from $\mathcal{D}_{s}$ and $\mathcal{D}_{t}$. Whenever $n \rightarrow \infty$, with probability 1:
\begin{equation}
\begin{aligned}
d_{\mathcal{H}\Delta \mathcal{H}}\left(p_{\mathcal{C}}, q_{\mathcal{C}}\right) &\leq \hat{d}_{\mathcal{H}\Delta \mathcal{H}}\left(\mathcal{D}_{s}, \mathcal{D}_{t}\right)
\\&+4 \sqrt{\frac{d \log (2\max\{1,\gamma\}m')+\log \left(\frac{2}{\delta}\right)}{\max\{1,\gamma\}m'}}.
\end{aligned}
\end{equation}
\end{proof}

We assume $f:\mathcal{X}\rightarrow\{0,1,\cdots,|\mathcal{C}_s|-1\}$ is the true labeling function. For a hypothesis $h\in\mathcal{H}$, the expected risk of source and target domains in $\mathcal{C}_s$ are
\begin{equation}
\begin{aligned}
&\epsilon_{s}(h)=E_{x \sim p}\left[\left|h(x)\neq f(x)\right|\right] \\
&\epsilon_{t}(h)=E_{x \sim q}\left[\left|h(x)\neq f(x)\right|\right],
\end{aligned}
\end{equation}and based on the definition of $d_{\mathcal{H} \Delta \mathcal{H}}$ in \cite{BlitzerCKPW07}, for any hypothesis $h, h'\in \mathcal{H}$, we have
\begin{equation}
\begin{aligned}
\left|\epsilon_{S}\left(h, h^{\prime}\right)-\epsilon_{T}\left(h, h^{\prime}\right)\right| \leq \frac{1}{2} d_{\mathcal{H} \Delta \mathcal{H}}\left(p_\mathcal{C}, q_\mathcal{C}\right),
\label{dHH}
\end{aligned}
\end{equation}which has been proved in Lemma 3 in \cite{Ben-DavidBCKPV10}. Following the Definition 2 in \cite{Ben-DavidBCKPV10}, let the ideal joint hypothesis $h^{*}$ be the hypothesis which minimizes the combined risk 
\begin{equation}
\begin{aligned}
h^{*}=\underset{h \in \mathcal{H}}{\operatorname{argmin}}\ \epsilon_{S}(h)+\epsilon_{T}(h), 
\end{aligned}
\end{equation}and the combined risk of the ideal hypothesis is
\begin{equation}
\begin{aligned}
\lambda=\epsilon_{S}\left(h^{*}\right)+\epsilon_{T}\left(h^{*}\right).
\label{lambda}
\end{aligned}
\end{equation}

\begin{theorem}
Let $\mathcal{H}$ be a hypothesis space of VC dimension d. $\mathcal{D}^m_{s}$ and $\mathcal{D}^m_{t}$ are $m$ samples drawn i.i.d. from class-balanced $\mathcal{D}_{s}$ and $\mathcal{D}_{t}$ respectively. Then for any $\delta\in(0,1)$, with probability at least $1-\delta$ (over the choice of the samples), for every $h\in\mathcal{H}$:
\begin{equation}
\begin{aligned}
\epsilon_{T}(h) \leq \epsilon_{S}(h)&+\frac{1}{2} \hat{d}_{\mathcal{H} \Delta \mathcal{H}}\left(\mathcal{D}^m_{s}, \mathcal{D}^m_{t}\right)
\\&+4 \sqrt{\frac{d \log (2\max\{1,\gamma\}m')+\log \left(\frac{2}{\delta}\right)}{\max\{1,\gamma\}m'}}+\lambda.
\end{aligned}
\end{equation}\label{theorem}
\end{theorem}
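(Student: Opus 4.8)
The plan is to follow the classical domain-adaptation argument of Ben-David et al., but restricted to the common label set, and then fold in the sample-complexity tail already supplied by Lemma \ref{data bound}. Everything reduces to three ingredients available in the excerpt: the triangle inequality for the $0$-$1$ disagreement pseudo-metric $\epsilon_\bullet(h,h')$; the divergence inequality (Eq. \ref{dHH}) $|\epsilon_S(h,h')-\epsilon_T(h,h')|\le\tfrac12 d_{\mathcal H\Delta\mathcal H}(p_\mathcal C,q_\mathcal C)$, itself a consequence of Lemma 3 of \cite{Ben-DavidBCKPV10}; and Lemma \ref{data bound}, which bounds $d_{\mathcal H\Delta\mathcal H}(p_\mathcal C,q_\mathcal C)$ by its empirical counterpart plus the $4\sqrt{\cdot}$ term with the $\max\{1,\gamma\}m'$ denominator inherited from the common-label-set version of the Kifer--Ben-David--Gehrke bound (Lemma \ref{lemma1}).

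First I would write $\epsilon_T(h)\le\epsilon_T(h^*)+\epsilon_T(h,h^*)$ by the triangle inequality through the ideal joint hypothesis $h^*$. Next, insert the source disagreement term, $\epsilon_T(h,h^*)\le\epsilon_S(h,h^*)+|\epsilon_T(h,h^*)-\epsilon_S(h,h^*)|$, and bound the last summand by $\tfrac12 d_{\mathcal H\Delta\mathcal H}(p_\mathcal C,q_\mathcal C)$ using Eq. \ref{dHH}. Apply the triangle inequality once more, $\epsilon_S(h,h^*)\le\epsilon_S(h)+\epsilon_S(h^*)$, and collect the two $h^*$ terms: $\epsilon_S(h^*)+\epsilon_T(h^*)=\lambda$ by definition (Eq. \ref{lambda}). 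This already gives the population bound $\epsilon_T(h)\le\epsilon_S(h)+\tfrac12 d_{\mathcal H\Delta\mathcal H}(p_\mathcal C,q_\mathcal C)+\lambda$. Finally, substitute Lemma \ref{data bound}: with probability at least $1-\delta$ over the draw of the two $m$-sample sets, $d_{\mathcal H\Delta\mathcal H}(p_\mathcal C,q_\mathcal C)\le\hat d_{\mathcal H\Delta\mathcal H}(\mathcal D^m_s,\mathcal D^m_t)+4\sqrt{(d\log(2\max\{1,\gamma\}m')+\log(2/\delta))/(\max\{1,\gamma\}m')}$, and plug this in, retaining the $4\sqrt{\cdot}$ coefficient as a valid upper bound rather than halving it together with $d_{\mathcal H\Delta\mathcal H}$. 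The result is exactly the claimed inequality.

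The main obstacle is bookkeeping rather than any deep step. One must keep careful track of which quantities are population versus empirical, and which are evaluated under the full distributions $p,q$ versus the common-label-set restrictions $p_\mathcal C,q_\mathcal C$, since Eq. \ref{dHH} and Lemma \ref{data bound} are stated for the restricted distributions while $\epsilon_S(h),\epsilon_T(h)$ are written over the full ones; I would make explicit that the disagreement risks in the triangle-inequality chain are taken under $p_\mathcal C,q_\mathcal C$ so that Eq. \ref{dHH} applies verbatim, and that the $\max\{1,\gamma\}m'$ sample-complexity term is precisely the one produced by Lemma \ref{data bound}. Because the source risk in the statement is the true risk $\epsilon_S(h)$ and not an empirical one, no additional VC concentration and no union bound over $\mathcal H$ are required here: the single probabilistic event is the one from Lemma \ref{data bound}, and the $d_{\mathcal H\Delta\mathcal H}$ estimate it provides already holds uniformly over $\mathcal H\Delta\mathcal H$ by construction.
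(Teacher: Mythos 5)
Your proposal reproduces the paper's own argument step for step: the same triangle-inequality chain through the ideal joint hypothesis $h^{*}$, the same use of Eq.~\ref{dHH} to introduce $\frac{1}{2} d_{\mathcal{H} \Delta \mathcal{H}}\left(p_{\mathcal{C}}, q_{\mathcal{C}}\right)$, the identification $\epsilon_{S}(h^{*})+\epsilon_{T}(h^{*})=\lambda$, and the final substitution of Lemma~\ref{data bound}. Your explicit remark that the $4\sqrt{\cdot}$ coefficient is retained (rather than halved along with $d_{\mathcal{H}\Delta\mathcal{H}}$) as a valid if loose upper bound matches what the paper's last inequality implicitly does, so the proposal is correct and essentially identical to the paper's proof.
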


\begin{proof}
The proof relies on the triangle inequality [Tr] for classification error \cite{Ben-DavidBCP06,CrammerKW08}, Eq. \ref{dHH}, Eq. \ref{lambda} and Lamma \ref{data bound} [L\ref{data bound}].
\begin{equation}\nonumber
\begin{aligned}
\epsilon_{T}(h) & \leq \epsilon_{T}\left(h^{*}\right)+\epsilon_{T}\left(h, h^{*}\right) \qquad\qquad \text{[Tr]}\\
&\leq \epsilon_{T}\left(h^{*}\right)+\epsilon_{S}\left(h, h^{*}\right)+\left|\epsilon_{T}\left(h, h^{*}\right)-\epsilon_{S}\left(h, h^{*}\right)\right|\\
& \leq \epsilon_{T}\left(h^{*}\right)+\epsilon_{S}\left(h, h^{*}\right)+\frac{1}{2} d_{\mathcal{H} \Delta \mathcal{H}}\left(p_\mathcal{C}, q_\mathcal{C}\right)\ \ \text{[Eq. \ref{dHH}]}\\
& \leq \epsilon_{T}\left(h^{*}\right)+\epsilon_{S}(h)+\epsilon_{S}\left(h^{*}\right)+\frac{1}{2} d_{\mathcal{H} \Delta \mathcal{H}}\left(p_\mathcal{C}, q_\mathcal{C}\right) \ \text{[Tr]}\\
&=\epsilon_{S}(h)+\frac{1}{2} d_{\mathcal{H} \Delta \mathcal{H}}\left(p_\mathcal{C}, q_\mathcal{C}\right)+\lambda \qquad \text{[Eq. \ref{lambda}]}\\
& \leq \epsilon_{S}(h)+\frac{1}{2} \hat{d}_{\mathcal{H} \Delta \mathcal{H}}\left(\mathcal{D}^m_{s}, \mathcal{D}^m_{t}\right)\\
&\qquad+4 \sqrt{\frac{d \log (2\max\{1,\gamma\}m')+\log \left(\frac{2}{\delta}\right)}{\max\{1,\gamma\}m'}}+\lambda \quad\text{[L\ref{data bound}]}
\end{aligned}
\end{equation}
\end{proof}

Here, we only bound the expected target risk in $\mathcal{C}_s$. The target risk in $\overline{\mathcal{C}}_t$ is much more hard to be bounded, because the distribution of unknown classes in $\overline{\mathcal{C}}_t$ could be completely different, i.e. $d_{\mathcal{H} \Delta \mathcal{H}}(p, q_{\overline{\mathcal{C}}_t})\rightarrow\infty$. From Theorem \ref{theorem}, we can find that the bound of expected target risk in $\mathcal{C}$ changes with $\gamma=\frac{|\mathcal{C}_s|}{|\mathcal{C}_t|}$ and $\alpha=\frac{|\mathcal{C}|}{|\mathcal{C}_s|}=\frac{m'}{m}$. Generally, $|\mathcal{C}_s|$ is known in domain adaptation, we have the following observations:
\begin{enumerate}
\item With fixed $|\mathcal{C}|$, if $|\mathcal{C}_t|\geq|\mathcal{C}_s|$, i.e. $\gamma=\frac{|\mathcal{C}_s|}{|\mathcal{C}_t|}\leq1$, the upper bound of expected target risk in $\mathcal{C}$ does not depend on the value of $|\mathcal{C}_t|$. If $|\mathcal{C}_t|<|\mathcal{C}_s|$, i.e. $\gamma=\frac{|\mathcal{C}_s|}{|\mathcal{C}_t|}>1$, with the increase of $|\mathcal{C}_t|$, the upper bound of expected target risk in $\mathcal{C}$ also increases when the VC dimension $d$ of the hypothesis $h$ greater than 2.
\item  With the increase of $|\mathcal{C}|$, the upper bound of expected target risk in $\mathcal{C}$ decreases when $\alpha\geq\frac{e}{2\max\{1,\gamma\}m}$, note that $0\leq|\mathcal{C}|\leq|\mathcal{C}_s|$, i.e. $\alpha=\frac{|\mathcal{C}|}{|\mathcal{C}_s|}=\frac{m'}{m}\in [0,1]$.
\end{enumerate}

\begin{table*}[h]
\centering \small
\caption{Accuracy ($\%$) of UDA tasks on \textbf{Office-31} $(\xi=0.32)$, \textbf{ImageNet-Caltech} $(\xi=0.07)$ and \textbf{VisDA2017} $(\xi=0.50)$ (Backbone: ResNet-50)}
\setlength{\tabcolsep}{3mm}{
\begin{tabular}{ccccccccccc}
\cmidrule{1-11}
\multirow {2.5}{*}{Method} & \multicolumn {7}{c}{Office-31} & \multicolumn {2}{c}{ImageNet-Calech} & \multirow {2.5} {*} {VisDA} \\
\cmidrule(lr){2-8}\cmidrule(lr){9-10} & $\mathrm{A} \rightarrow \mathrm{W}$ & $\mathrm{D} \rightarrow \mathrm{W}$ & $\mathrm{W} \rightarrow \mathrm{D}$ & $\mathrm{A} \rightarrow \mathrm{D}$ & $\mathrm{D} \rightarrow \mathrm{A}$ & $\mathrm{W} \rightarrow \mathrm{A}$ & Avg & $\mathrm{I} \rightarrow \mathrm{C}$ & $\mathrm{C} \rightarrow \mathrm{I}$ \\
\cmidrule{1-11}
ResNet & 75.94 & 89.60 & 90.91 & 80.45 & 78.83 & 81.42 & 82.86 & 70.28 & 65.14 & 52.80 \\
DANN & 80.65 & 80.94 & 88.07 & 82.67 & 74.82 & 83.54 & 81.78 & 71.37 & 66.54 & 52.94 \\
RTN & 85.70 & 87.80 & 88.91 & 82.69 & 74.64 & 83.26 & 84.18 & 71.94 & 66.15 & 53.92 \\
IWAN & 85.25 & 90.09 & 90.00 & 84.27 & 84.22 & 86.25 & 86.68 & 72.19 & 66.48 & 58.72 \\
PADA & 85.37 & 79.26 & 90.91 & 81.68 & 55.32 & 82.61 & 79.19 & 65.47 & 58.73 & 44.98 \\
ATI  & 79.38 & 92.60 & 90.08 & 88.40 & 78.85 & 81.57 & 84.48 & 71.59 & 67.36 & 54.81 \\
OSBP & 66.13 & 73.57 & 85.62 & 72.92 & 47.35 & 60.48 & 67.68 & 62.08 & 55.48 & 30.26 \\
UAN & 85.62 & 94.77 & 97.99 & 86.50 & 85.45 & 85.12 & 89.24 & 75.28 & 70.17 & 60.83 \\
\cmidrule{1-11}
S-UAN $w_0=0$ & 86.64 & 95.47 & 97.08 & 85.35 & 90.27 & 90.20 & 90.84 & $\verb|--|$ & $\verb|--|$ & 65.05 \\
S-UAN & \bf{93.16} & \bf{97.04} & \bf{98.08} & \bf{93.39} & \bf{91.23} & \bf{91.27} & \bf{94.03} &  \bf{78.63} & \bf{72.06} & \bf{65.20} \\
\cmidrule{1-11}
\label{Office-31}
\end{tabular}}
\end{table*}

\begin{table*}
\centering \small
\caption{Accuracy (\%) of UDA tasks on \textbf{Office-Home} $(\xi=0.15)$ dataset (Backbone; ResNet-50)}
\setlength{\tabcolsep}{0.5mm}{
\begin{tabular}{cccccccccccccc}
\cmidrule{1-14}
\multirow {2.5}{*}{Method} & \multicolumn {13}{c}{Office-Home} \\
\cmidrule{2-14}& $\mathrm{Ar} \rightarrow \mathrm{Cl}$ & $\mathrm{Ar} \rightarrow \mathrm{Pr}$ & $\mathrm{Ar} \rightarrow \mathrm{Rw}$ & $\mathrm{Cl} \rightarrow \mathrm{Ar}$ & $\mathrm{Cl} \rightarrow \mathrm{Pr}$ & $\mathrm{Cl} \rightarrow \mathrm{Rw}$ & $\mathrm{Pr} \rightarrow \mathrm{Ar}$ & $\mathrm{Pr} \rightarrow \mathrm{Cl}$ & $\mathrm{Pr} \rightarrow \mathrm{Rw}$ & $\mathrm{Rw} \rightarrow \mathrm{Ar}$ & $\mathrm{Rw} \rightarrow \mathrm{Cl}$ & $\mathrm{Rw} \rightarrow \mathrm{Pr}$ & Avg \\
\cmidrule{1-14}
ResNet & 59.37 & 76.58 & 87.48 & 69.86 & 71.11 & 81.66 & 73.72 & 56.30 & 86.07 & 78.68 & 59.22 & 78.59 & 73.22 \\
DANN & 56.17 & 81.72 & 86.87 & 68.67 & 73.38 & 83.76 & 69.92 & 56.84 & 85.80 & 79.41 & 57.26 & 78.26 & 73.17 \\
RTN & 50.46 & 77.80 & 86.90 & 65.12 & 73.40 & 85.07 & 67.86 & 45.23 & 85.50 & 77.20 & 55.55 & 78.79 & 70.91 \\
IWAN & 52.55 & 81.40 & 86.51 & 70.58 & 70.99 & 85.29 & 74.88 & 57.33 & 85.07 & 7.48 & 59.65 & 78.91 & 73.39 \\
PADA & 39.58 & 69.37 & 76.26 & 62.57 & 66.39 & 77.47 & 48.39 & 35.79 & 79.60 & 75.94 & 44.50 & 78.10 & 62.91 \\
ATI & 52.90 & 80.37 & 85.91 & 71.08 & 72.41 & 84.39 & 74.28 & 57.84 & 85.61 & 76.06 & 60.17 & 78.42 & 73.29 \\
OSBP & 47.75 & 60.90 & 76.78 & 59.23 & 61.58 & 74.33 & 61.67 & 44.50 & 79.31 & 70.59 & 54.95 & 75.18 & 63.90 \\
UAN & \bf{63.00} & 82.83 & 87.85 & 76.88 & 78.70 & 85.36 & 78.22 & 58.59 & 86.80 & 83.37 & \bf{63.17} & 79.43 & 77.02 \\
\cmidrule{1-14}
S-UAN & 61.91 & \bf{82.85} & \bf{91.73} & \bf{83.41} & \bf{81.78} & \bf{89.32} & \bf{81.86} & \bf{59.83} & \bf{90.51} & \bf{83.42} & 61.84 & \bf{83.45} & \bf{79.33} \\
\cmidrule{1-14}
\label{Office-Home}
\end{tabular}}
\end{table*}

\section{Experiments}
\subsection{Experimental Setup}
\subsubsection{Datasets}
\textbf{Office-31} \cite{saenko2010adapting} dataset contains 31 classes and 3 different domains. They are photos from amazon (\textbf{A}), dslr (\textbf{D}) and webcam (\textbf{W}). The 10 classes shared by \textbf{Office-31} and \textbf{Caltech-256} \cite{gong2012geodesic} are used as $\mathcal{C}$, then in alphabetical order, the next 10 classes as $\overline{\mathcal{C}}_{s},$ and the reset 11 classes as $\overline{\mathcal{C}}_{t}$. In these tasks $\xi=0.32$.

\textbf{Office-Home} \cite{venkateswara2017deep} consists of 65 categories and 4 domains: Artistic (\textbf{Ar}), Clip-Art (\textbf{Cl}), Product (\textbf{Pr}) and Real-World (\textbf{Rw}) images. Similarly, in alphabet order, the first 10 classes are used as $\mathcal{C}$ and the next 5 classes as $\overline{\mathcal{C}}_{s}$. According to the first property of Theorem \ref{theorem}, we use the rest 50 classes as $\overline{\mathcal{C}}_{t}$. In these tasks $\xi=0.15$.

\textbf{VisDA2017} \cite{peng2018visda} dataset consists of 2 very different domains: synthetic images (\textbf{Syn}) generated by game engines and real images (\textbf{Real}). Both of them contain 12 classes, and we only consider the practical \textbf{Syn}$\rightarrow$\textbf{Real} task. The first 6 classes are used as $\mathcal{C}$, the next 3 classes as $\overline{\mathcal{C}}_{s}$ and the rest as $\overline{\mathcal{C}}_{t}$. In these tasks $\xi=0.50$.

\textbf{ImageNet-Caltech} is established from \textbf{ImageNet-1K} ($\mathbf{I}$) \cite{russakovsky2015imagenet} and \textbf{Caltech-256} ($\mathbf{C}$) with 1000 and 256 classes respectively. The 84 classes shared by both domains are used as $\mathcal{C}$ and their private classes as $\overline{\mathcal{C}}_s$ and $\overline{\mathcal{C}}_t$. This task naturally belongs to the universal domain adaptation scenario due to the category gap between two datasets. There are two UDA tasks can be formed: $\mathbf{I} \rightarrow \mathbf{C}$ and $\mathbf{C} \rightarrow \mathbf{I}.$ Note that the validation sets are used as target domain to eliminate the effects of pre-training. In these tasks $\xi=0.07$.

Some examples in each dataset are shown in Fig. \ref{datasets}.

\begin{figure}
	\centering
	\includegraphics[width=1\linewidth]{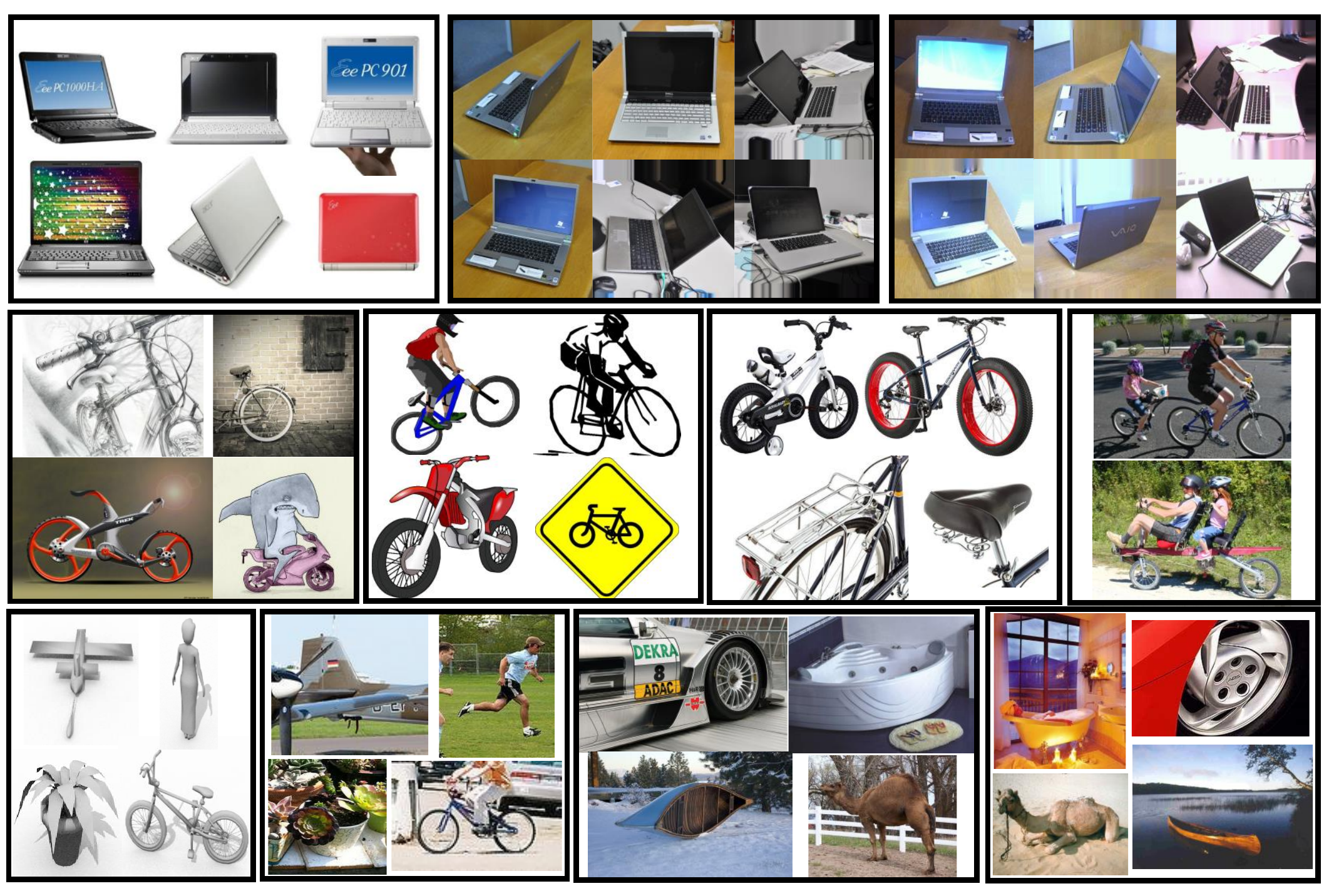}
	\caption{Some examples in following datasets: from left to right, the first line is Amazon (\textbf{A}), Dslr (\textbf{D}) and Webcam (\textbf{W}) in Office-31. The second line is Artistic (\textbf{Ar}), Clip-Art (\textbf{Cl}), Product (\textbf{Pr}) and Real-World images (\textbf{W}) in Office-Home. The third line is Synthetic (\textbf{Syn}) and Real images (\textbf{Real}) in VisDA2017, ImageNet-1K (\textbf{I}) and Caltech-256 (\textbf{C}).}
	\label{datasets}
\end{figure}

\subsubsection{Evaluation Details}
\textbf{Compared Methods}. The proposed S-UAN is compared with methods in (\textbf{1}) Source-only without domain adaptation (DA): \textbf{ResNet} \cite{he2016deep}, (\textbf{2}) close-set DA: Domain-Adversarial Neural Networks (\textbf{DANN}) \cite{ganin2016domain}, Residual Transfer Networks (\textbf{RTN}) \cite{long2016unsupervised}, (\textbf{3}) partial DA: Importance Weighted Adversarial Nets (\textbf{IWAN}) \cite{zhang2018importance}, Partial Adversarial DA (\textbf{PADA}) \cite{cao2018partialeccv}, (\textbf{4}) open set DA: Assign-and-Transform-Iteratively (\textbf{ATI}) \cite{panareda2017open}, Open Set Back-Propagation (\textbf{OSBP}) \cite{saito2018open}, (\textbf{5}) UDA: Universal Adaptation Network (\textbf{UAN}) \cite{you2019universal}. These methods are the state of the art in their respective scenarios. In the following experiments, we evaluation all these methods in the UDA scenario.

\textbf{Evaluation Protocols}. All the samples with their labels in $\overline{\mathcal{C}}_t$ are viewed as one unified "unknown" class and the final accuracy is averaged by per-class accuracy for $|\mathcal{C}|+1$ classes. We extend non-universal methods by threshold mechanism as used in \cite{you2019universal}. Only if the prediction confidence is higher than the threshold, the input image is considered as the most likely class, otherwise the image is viewed as the "unknown" class.

\textbf{Implementation Details}. We implement all the methods in PyTorch and use ResNet-50 \cite{he2016deep} as fine-tuned model, which is pre-trained on ImageNet. The hyperparameter $w_0$ is set to 1 in Office-31 and VisDA, where $\xi$ are relatively large. While $w_0$ is set to 0 in Office-Home and ImageNet-Caltech, where $\xi$ are relatively small. Particularly, in ImageNet-Caltech task, a gradually increasing threshold are used in Eq. \ref{inference}, which reaches 0.5 at the end of training.

\begin{figure}
	\centering
	\includegraphics[width=1\linewidth]{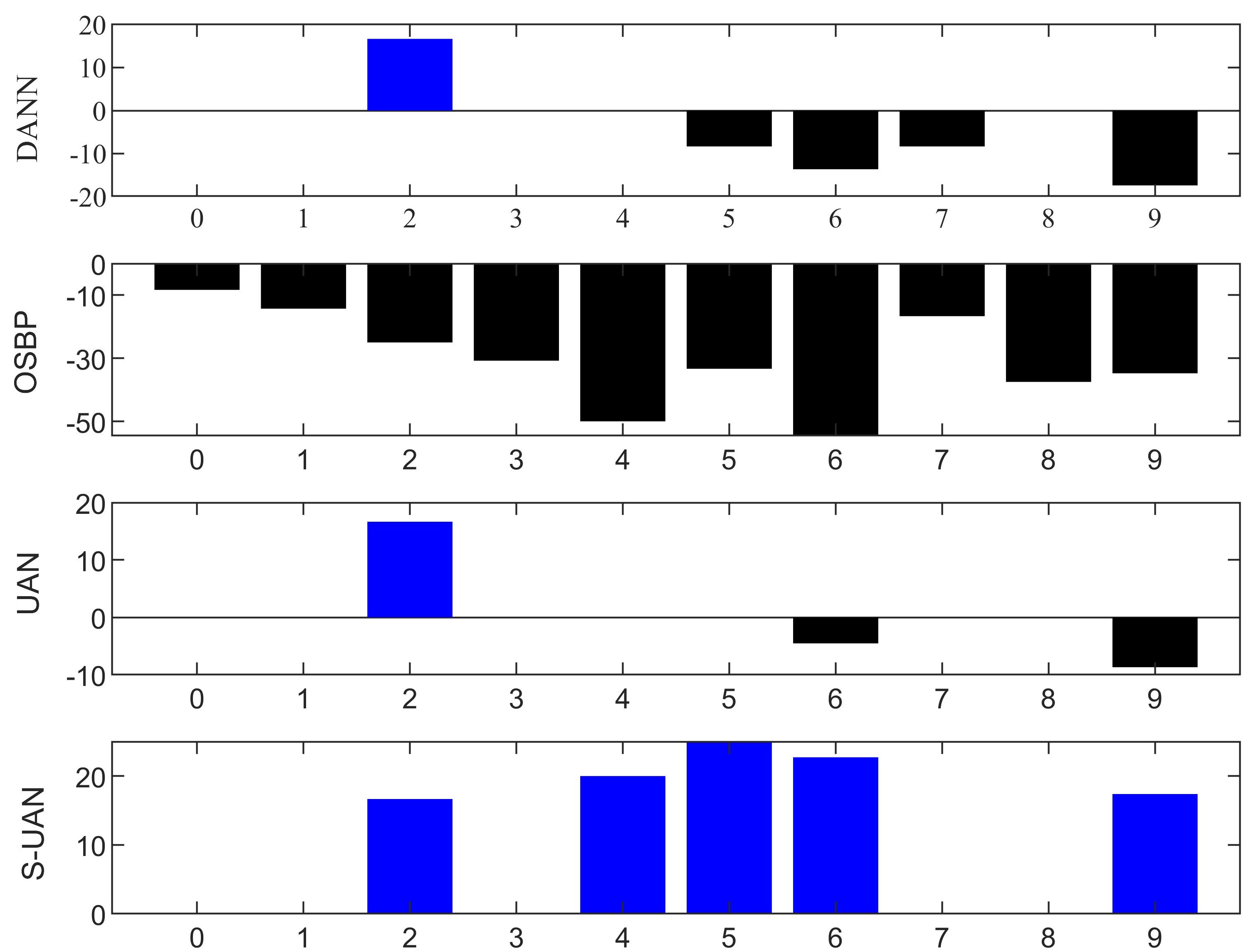}
	\caption{The \textbf{negative transfer} in UDA (task $\mathbf{A\rightarrow D}$).}
	\label{neg}
\end{figure}

\begin{figure}[h]
	\centering
	\subfigure[UAN]{
		\begin{minipage}[c]{0.23\textwidth}
			\centering
			\includegraphics[width=1\linewidth]{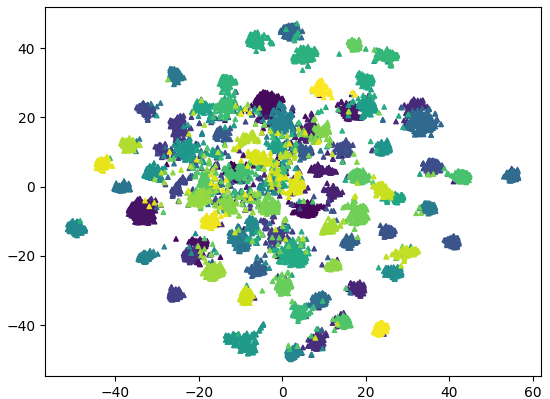}
			\label{I-C-UAN}
		\end{minipage}}
	\subfigure[S-UAN]{
		\begin{minipage}[c]{0.23\textwidth}
			\centering
			\includegraphics[width=1\linewidth]{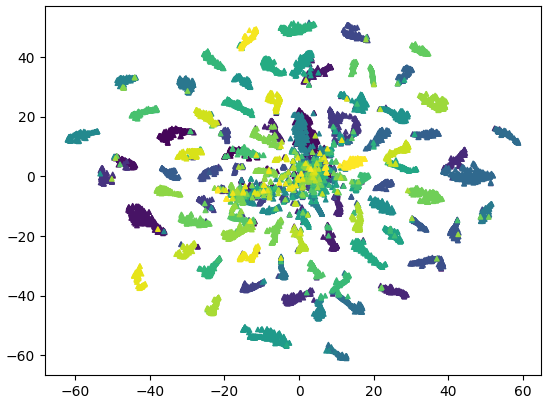}
			\label{I-C}
		\end{minipage}}
	\caption{\textbf{t-SNE visualization} on task \textbf{ImageNet-Caltech}. Each figure contains all the test samples in the common label set (84 classes). Points are colored with their true labels.}
	\label{t-SNE}
\end{figure}

\subsection{Classification Results}
\label{Classification Results}
From Tables \ref{Office-31} and \ref{Office-Home}, we can find that S-UAN outperforms all the compared methods. Particularly, we focus on the comparison with UAN \cite{you2019universal}. Although UAN avoids negative transfer in most tasks, it still has negative transfer in some tasks. For example, Fig. \ref{neg} indicates the accuracy gain of each class compared to ResNet (no adaptation) on task $A\rightarrow D$. We can see that DANN, OSBP and UAN still suffer from negative transfer. Note that the accuracy of the label set $\{0,1,3,7,8\}$ arrives $100\%$ in ResNet. Except these classes, S-UAN avoids negative transfer and achieve widely positive transfer in all the classes. This is because S-UAN makes full use of the prediction information of the target data and employs class-wise weighting mechanism in the source domain.

Another perspective to analyze the classification results is the dimension-reduction of features. In general, low-dimensional distinguishable features are easier to distinguish in higher dimensions. To this end, we visualize features extracted from trained models by t-SNE tool. In Fig. \ref{t-SNE}, we compare S-UAN and UAN on task ImageNet-Caltech, which contains 84 common classes and 1173 total classes. Overall, the distinguishability of UAN features is not as good as S-UAN, so the classification accuracy of S-UAN is higher than UDA's.

\subsection{Analysis on UDA Settings}
\textbf{Effect of Varying Size of $\overline{\mathcal{C}}_{t}$}. We explore the performance of methods mentioned in Section \ref{Classification Results} with the different sizes of $\overline{\mathcal{C}}_{t}$ on $\operatorname{task} \mathbf{A} \rightarrow \mathbf{D}$ in Office31 dataset, note that $\overline{\mathcal{C}}_{s}$ also changes correspondingly with fixed $\left|\mathcal{C}_{s} \cup \mathcal{C}_{t}\right|$ and $\xi$. As shown in Fig. \ref{ct}, S-UAN outperforms all the compared methods on all the sizes of $\overline{\mathcal{C}}_{t}$. Particularly, when $\left|\overline{\mathcal{C}}_{t}\right|=0,$ this UDA setting degenerates to the partial domain adaptation, where $\mathcal{C}_{t} \subset \mathcal{C}_{s}$, and when $\left|\overline{\mathcal{C}}_{t}\right|=21$, this UDA setting degenerates to the open set domain adaptation, where $\mathcal{C}_{s} \subset \mathcal{C}_{t},$ the performance of S-UAN is comparable to UAN's. When $\left|\overline{\mathcal{C}}_{t}\right|$ vary between 0 and 21, which are the more general settings, S-UAN performs much better than UAN. When the size of $\overline{\mathcal{C}}_{t}$ appears in the middle of 0 and 21, S-UAN outperforms other methods with the largest margin. Generally, S-UAN produces better results with all size of $\overline{\mathcal{C}}_{t}$. 

\textbf{Effect of Varying Size of $\mathcal{C}$}. We also explore the performance of these methods with different size of the common label set $\mathcal{C}$. The same task is used as before, and we have $|\mathcal{C}|+\left|\overline{\mathcal{C}}_{t}\right|+\left|\overline{\mathcal{C}}_{s}\right|=31.$ For simplicity, we let $\left|\overline{\mathcal{C}}_{t}\right|=\left|\overline{\mathcal{C}}_{s}\right|+1$ or $\left|\overline{\mathcal{C}}_{t}\right|=\left|\overline{\mathcal{C}}_{s}\right|$, and let $|\mathcal{C}|$ vary from 0 to 31. Fig. \ref{c} shows the performance of these methods with different sizes of $|\mathcal{C}|$. When $|\mathcal{C}|=0$, there is no known class appearing in target domain, i.e. $\mathcal{C}_{t} \cap \mathcal{C}_{s}=\emptyset$. We see that the accuracy of S-UAN is similar to UDA’s. When $|\mathcal{C}|>0$, i.e. $\mathcal{C}_{t} \cap \mathcal{C}_{s}\neq\emptyset$, we find that S-UAN outperforms all the compared methods with large margins, revealing that class-wise weighting mechanism of proposed \emph{margin vector} is more general and effective than sample-wise weighting mechanism in UDA.

\begin{figure}
	\centering
	\subfigure[Accuracy w.r.t. $\left|\overline{\mathcal{C}}_{t}\right|$]{
		\begin{minipage}[c]{0.23\textwidth}
			\centering
			\includegraphics[width=1\linewidth]{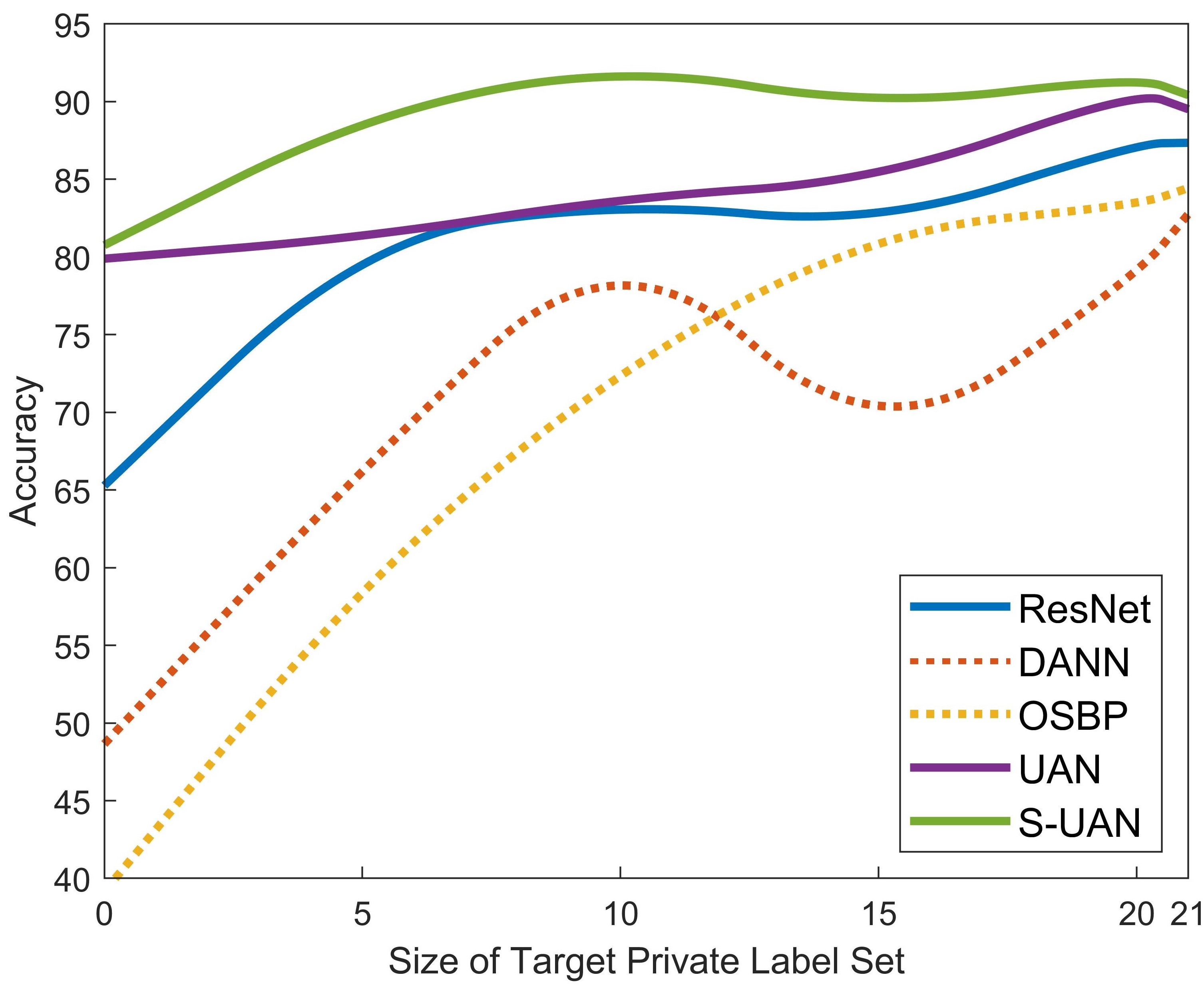}
			\label{ct}
		\end{minipage}}
	\subfigure[Accuracy w.r.t. $|\mathcal{C}|$]{
		\begin{minipage}[c]{0.23\textwidth}
			\centering
			\includegraphics[width=1\linewidth]{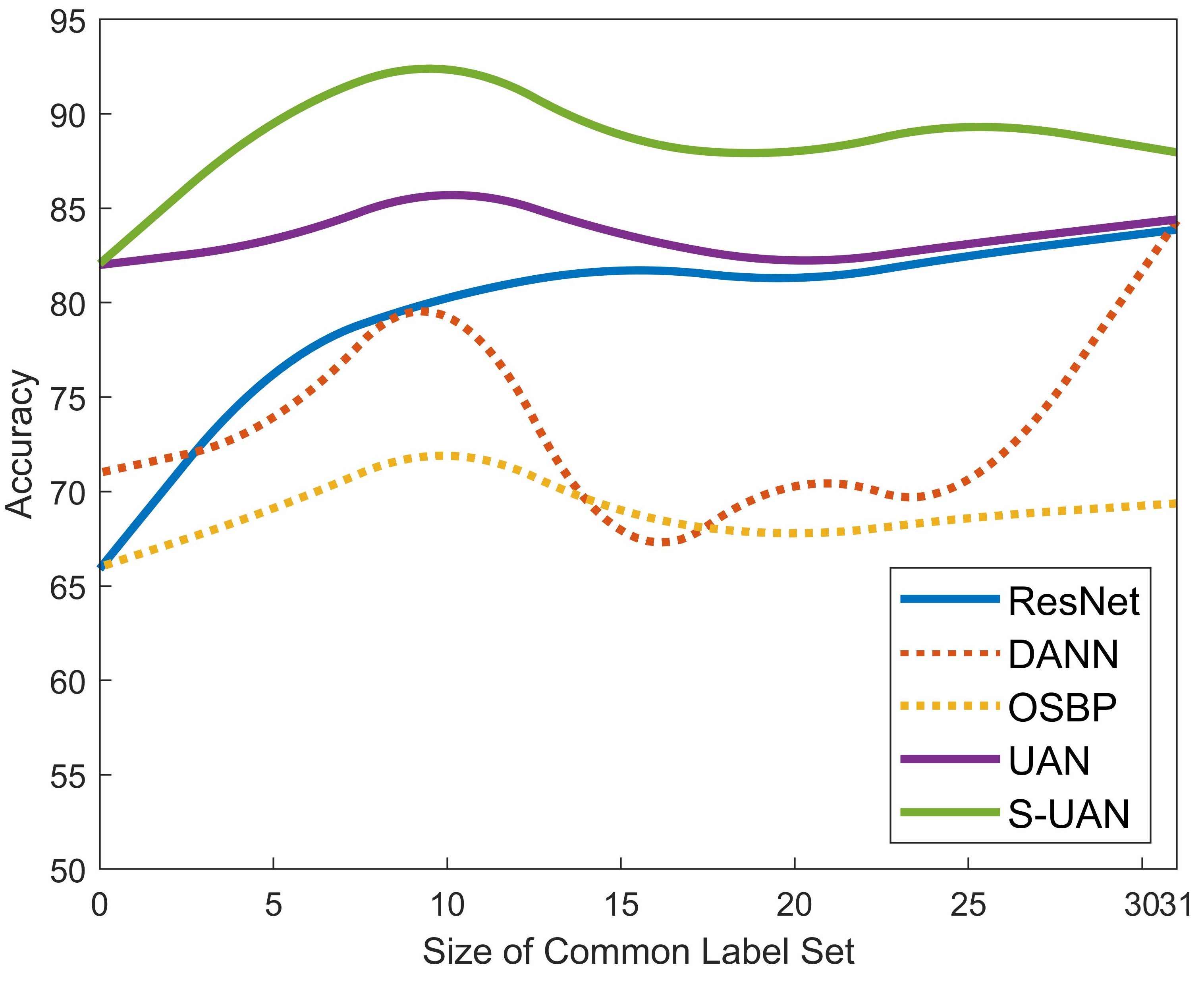}
			\label{c}
		\end{minipage}}
	\caption{Analysis on \textbf{Different UDA Settings}.}
	\label{UDAset}
\end{figure}

\begin{figure}
	\centering
	\subfigure[Accuracy w.r.t. $\left|\mathcal{C}_{t}\right|$]{
		\begin{minipage}[c]{0.23\textwidth}
			\centering
			\includegraphics[width=1\linewidth]{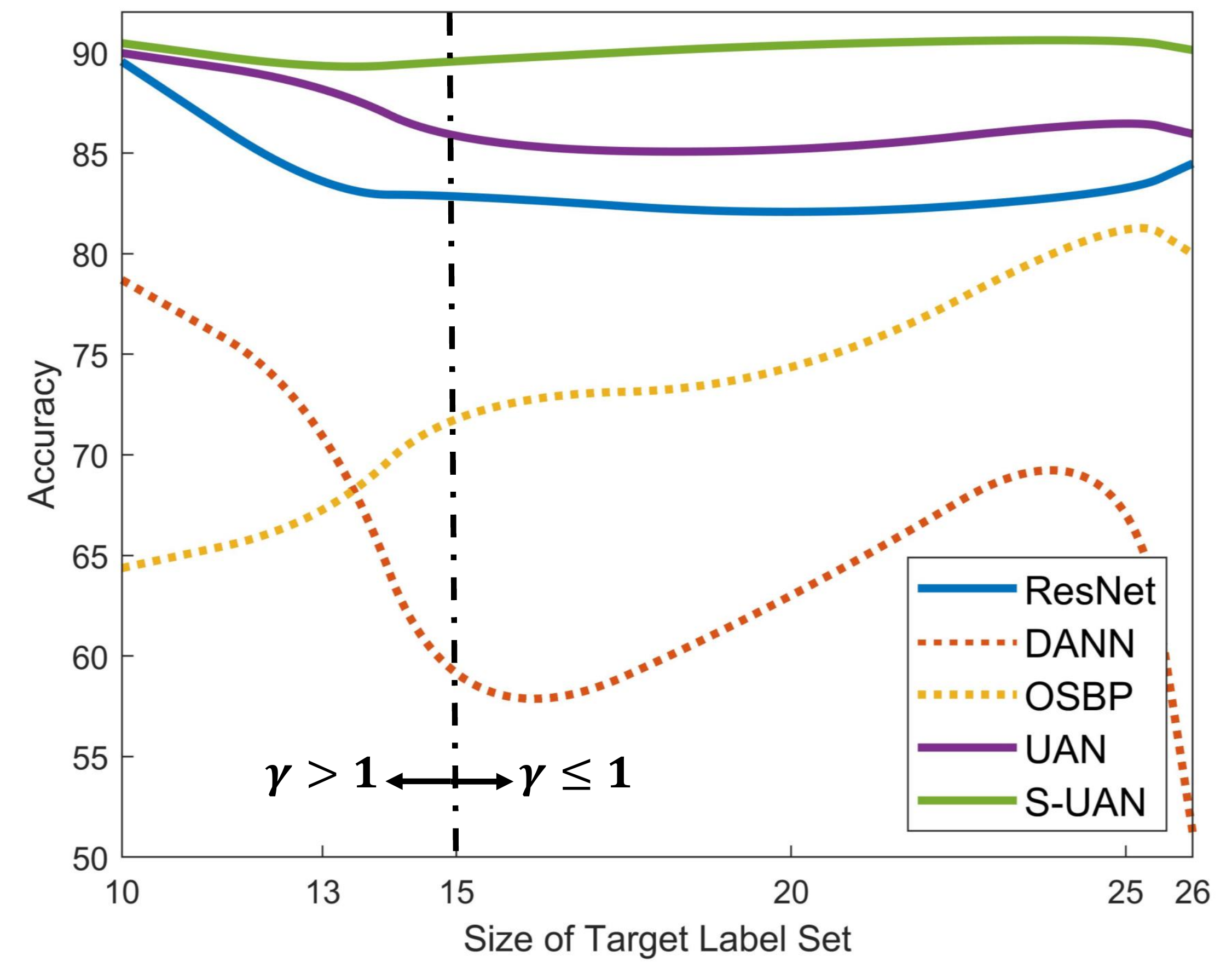}
			\label{bct}
		\end{minipage}}
	\subfigure[Accuracy w.r.t. $|\mathcal{C}|$]{
		\begin{minipage}[c]{0.23\textwidth}
			\centering
			\includegraphics[width=1\linewidth]{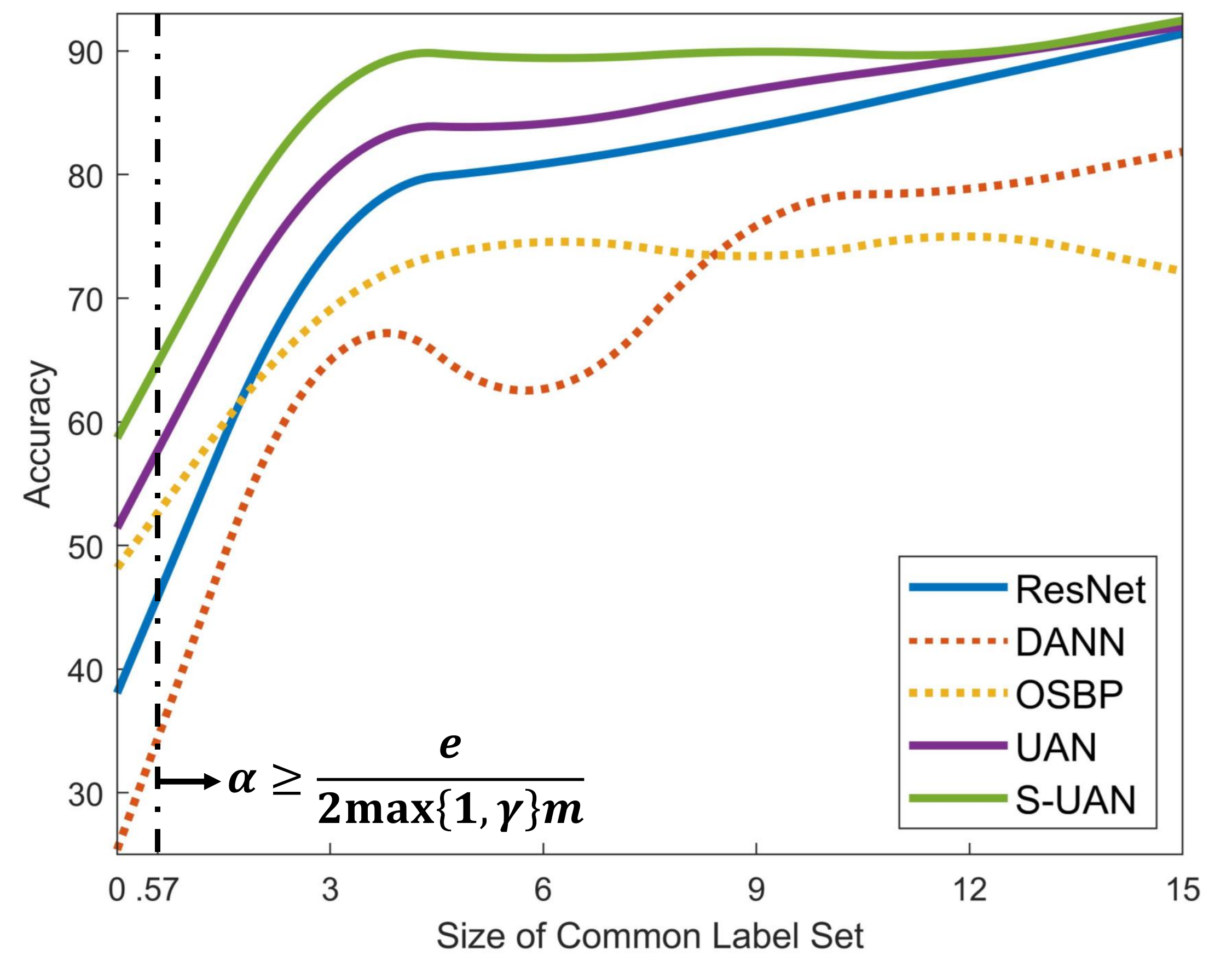}
			\label{bc}
		\end{minipage}}
	\caption{Analysis on \textbf{Target Risk}.}
	\label{GB}
\end{figure}

\subsection{Analysis on Target Risk}
In this section, we verify the properties of generalization bound by analyzing the target risk for task $\mathbf{A}\rightarrow \mathbf{D}$. We fix $|\mathcal{C}_s|=15$. \textbf{ResNet}, \textbf{DANN}, \textbf{OSBP}, \textbf{UAN} and \textbf{S-UAN} are compared in what follows.

\textbf{Property 1: with fixed $|\mathcal{C}|$ and the increase of $|\mathcal{C}_t|$, the generalization bound increases when $|\mathcal{C}_t|<|\mathcal{C}_s|$.} In this experiment, we fix $|\mathcal{C}|=10$ and set $|\mathcal{C}_t|$=10,13,15,20,25,26, correspondingly, $\gamma=1.50,1.15,1.00,0.75,0.60,0.58$. The experiment results are shown in Fig. \ref{bct}, except DANN and OSBP, other well-behaved methods are all suffered from performance degradation with the increase of $|\mathcal{C}_t|$ when $\gamma>1$. And when $\gamma\leq1$, the performance of three well-behaved methods does not vary obviously. This experimental result verifies the first property of Theorem \ref{theorem}. And we can find that the target risk of the proposed S-UAN is the lowest, which is the best learning algorithm for universal domain adaptation compared with the state-of-the-art domain adaptation methods.

\textbf{Property 2: with the increase of $|\mathcal{C}|$, the generalization bound decreases.} In this experiment, we fix $|\mathcal{C}_t|=15$ and set $|\mathcal{C}|=0,3,6,9,12,15$, correspondingly, $\alpha=0.0,0.2,0.4,0.6,0.8,1.0$, here $\gamma=1$. The experiment results are shown in Fig. \ref{bc}, the sampling batch size $m=36$, we can see that the performance of all the methods is improved with the increase of $|\mathcal{C}|$ when $|\mathcal{C}|=\alpha|\mathcal{C}_s|\geq\frac{e}{2\max\{1,\gamma\}m}|\mathcal{C}_s|=0.57$. Note that $|\mathcal{C}|$ can only be an integer, although the generalization bound becomes higher when $|\mathcal{C}|$ varies from 0 to 0.57, we find that the value of generalization bound at $|\mathcal{C}|=1$ is lower than $|\mathcal{C}|=0$. This experimental result verifies the second property of Theorem \ref{theorem}. And the target risk of the proposed S-UAN is still the lowest of these methods.

\subsection{Analysis of S-UAN}
\label{Analysis of S-UAN}
\textbf{Weight Analysis}. To verify the inference introduced in Section \ref{Introduction}, we plot the estimated probability density distribution for different components of the $w^s(\mathbf{x})$ and $w^t(\mathbf{x})$ in Fig. \ref{wd}. Results show that the distributions are well separated between shared and private label sets of both domains, explaining why S-UAN achieves stable performance in different UDA settings. We observe that distributions of source shared and private parts are much more distinguishable compared with UAN's. And distributions of target shared and private parts are as distinguishable as UAN's. Note that the definition of the weight of target samples is simple in S-UAN, but is also distinguishable enough for target shared and private parts.

\begin{figure}
	\centering
	\subfigure[A$\rightarrow$D]{
		\begin{minipage}[c]{0.23\textwidth}
			\centering
			\includegraphics[width=1\linewidth]{figures/w-A-D.png}
			\label{src}
		\end{minipage}}
	\subfigure[Ar$\rightarrow$Cl]{
		\begin{minipage}[c]{0.23\textwidth}
			\centering
			\includegraphics[width=1\linewidth]{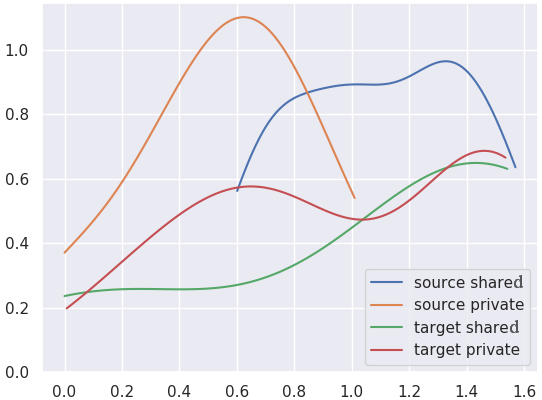}
			\label{adv}
		\end{minipage}}
	\subfigure[VisDA]{
		\begin{minipage}[c]{0.23\textwidth}
			\centering
			\includegraphics[width=1\linewidth]{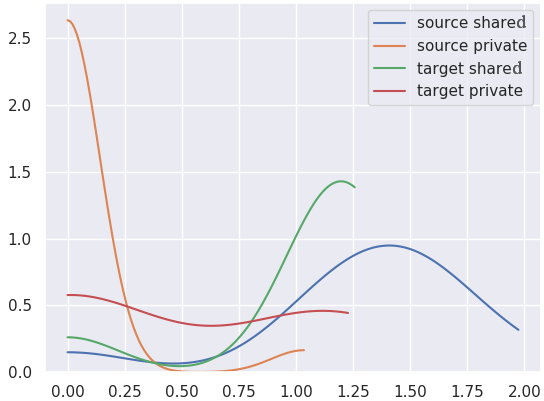}
			\label{metric}
		\end{minipage}}
	\subfigure[ImageNet$\rightarrow$Caltech]{
		\begin{minipage}[c]{0.23\textwidth}
			\centering
			\includegraphics[width=1\linewidth]{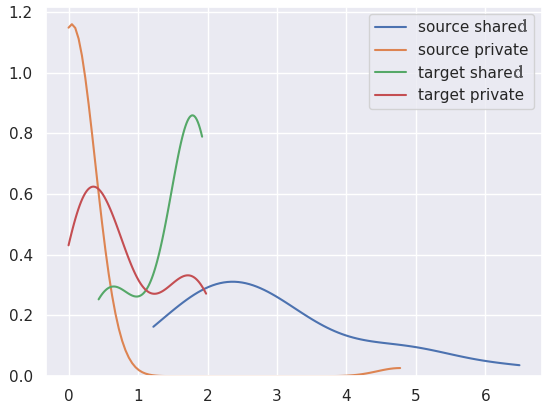}
			\label{metric}
		\end{minipage}}
	\caption{\textbf{Probability density distribution} of the $w^s(\mathbf{x})$ $($or $w^t(\mathbf{x})$$)$ on the following groups: (1) 'source shared': source samples in $\mathcal{C}$ (blue); (2) 'source private': source samples in $\overline{\mathcal{C}}_{s}$ (orange); (3) 'target shared': target samples in $\mathcal{C}$ (green); (4) 'target private': target samples in $\overline{\mathcal{C}}_{t}$ (red).}
	\label{wd}
\end{figure}

\textbf{Threshold Sensitivity}. Although we have fixed the criterion for distinguishing unknown classes in all experiments, i.e. $w^{t}(\mathbf{x}) \geq 0.5$. We also explore the sensitivity of the threshold for analysis. As shown in Fig. \ref{w0}, though S-UAN's accuracies vary by $2\sim3\%$ with the change of the threshold, it achieves stable performance and outperforms other methods by large margins with the change of the threshold. Note that in our experiments, UAN's accuracy on VisDA can't reach their report in our experiments. However, S-UAN's accuracies still outperform UAN's reported accuracy, which is fully tuned and their best accuracy.

\begin{figure}
	\centering
	\subfigure[A$\rightarrow$D]{
		\begin{minipage}[c]{0.23\textwidth}
			\centering
			\includegraphics[width=1\linewidth]{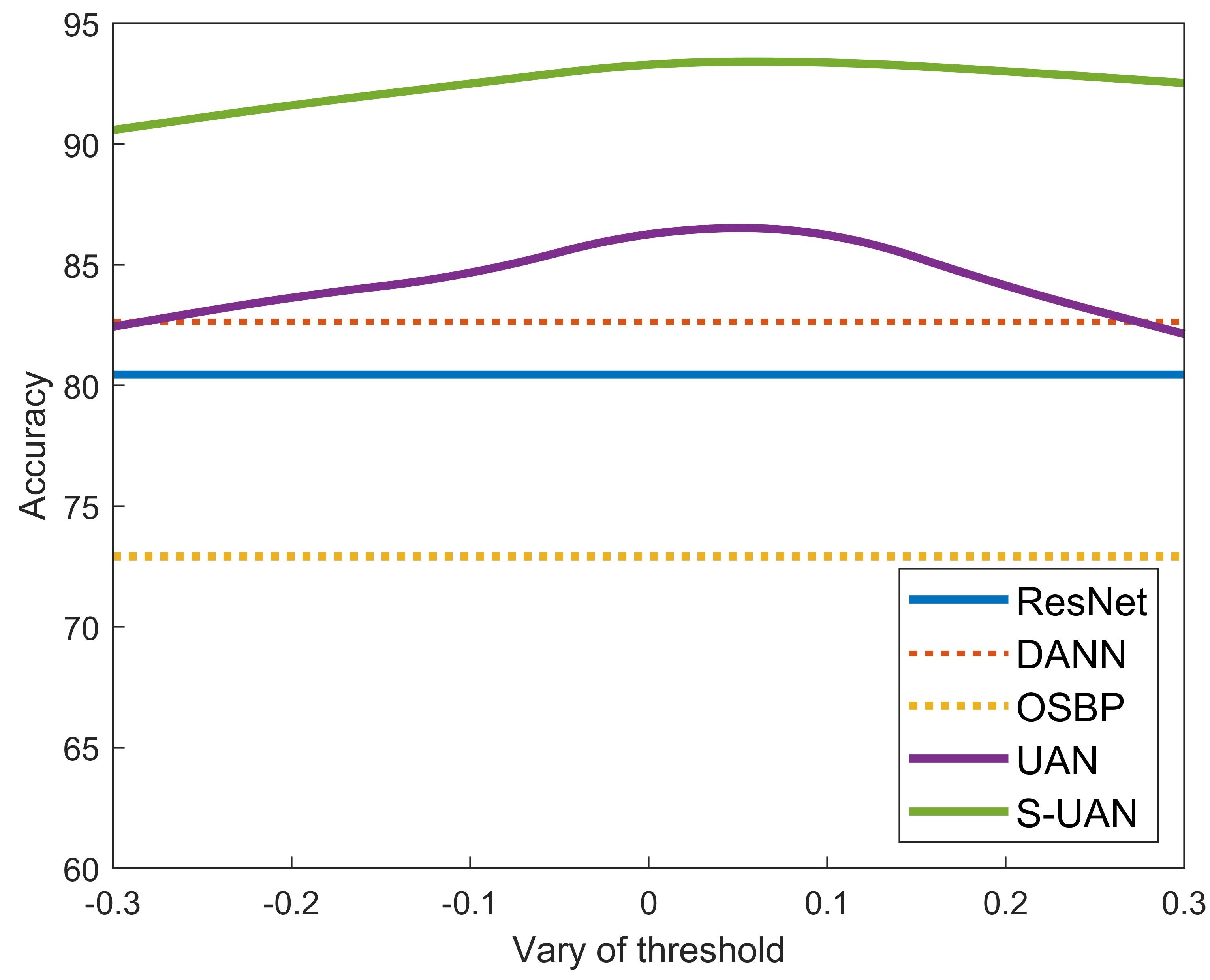}
		\end{minipage}}
	\subfigure[VisDA]{
		\begin{minipage}[c]{0.23\textwidth}
			\centering
			\includegraphics[width=1\linewidth]{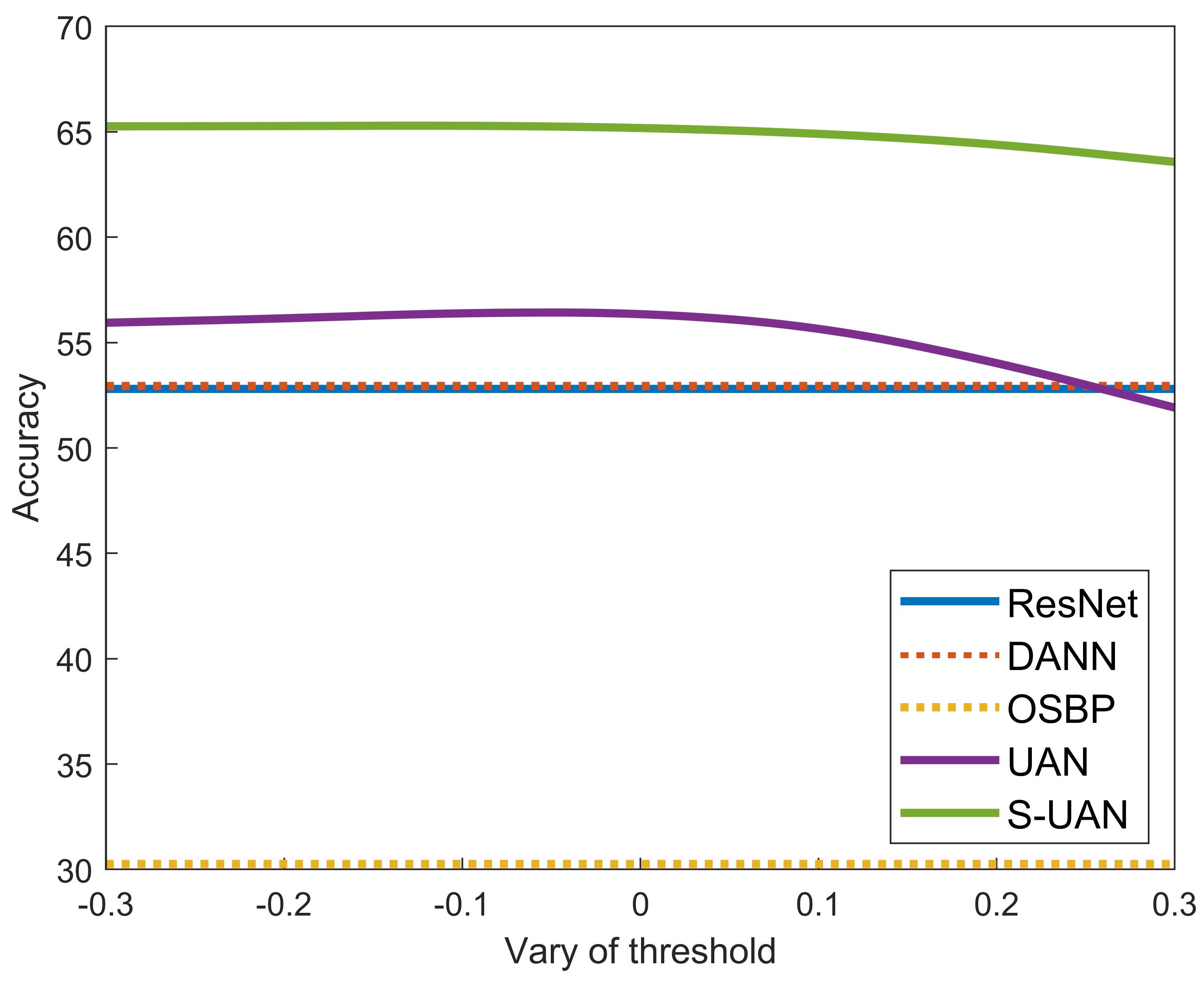}
		\end{minipage}}
	\caption{\textbf{Sensitivity to threshold}. The horizontal axis is the value added to the initial threshold, i.e. $0.5\pm0.3$ in S-UAN and $-0.5\pm0.3$ in UAN.}
	\label{w0}
\end{figure}

\textbf{Hyperparameter Analysis}. The only hyperparameter in S-UAN is the activated threshold $w_0$ of the normalized function as shown in Eq.\ref{norm}. The value of $w_0$ can be 0 or 1. When $w_0=0$, $w_0$ has no effect on Eq.\ref{norm}, and all the $w^s(\mathbf{x})$ $($or $w^t(\mathbf{x})$$)$ can be activated. When $w_0=1$, only the $w^s(\mathbf{x})$ $($or $w^t(\mathbf{x})$$)$ greater than the average value can be activated, and the remaining $w^s(\mathbf{x})$ $($or $w^t(\mathbf{x})$$)$ are set to 0. In the default settings of S-UAN, $w_0$ is set to 1 in Office-31 and VisDA, where $|\mathcal{C}_s|\geq |\mathcal{C}_t|$. And $w_0$ is set to 0 in Office-Home and ImageNet-Caltech, where $|\mathcal{C}_s|<|\mathcal{C}_t|$. To do \textbf{ablation study} on $w_0$, we also set $w_0=0$ in Office-31 and VisDA (\textbf{S-UAN $\mathbf{w_0=0}$}) in Table \ref{Office-31}. We can find that S-UAN outperforms all the compared methods, even if $w_0=0$ in all experiments.

\section{Related Work}
In this section, we briefly review DA methods and margin theory in classification.
\subsection{Universal Domain Adaptation}
Universal Domain Adaptation was first proposed in \cite{you2019universal}, where the target label set is unknown. In traditional domain adaptation settings, the target label set or its relationship with source label set is assumed to be known. These traditional settings can be summarized by three types: closed set, partial and open set DA. Closed set DA assumes that $\mathcal{C}_s=\mathcal{C}_t$, where the domain shift is eliminated as much as possible \cite{DBLP:conf/eccv/SaenkoKFD10,DBLP:journals/pami/DuanTX12,DBLP:conf/icml/ZhangSMW13}. These methods provide insight in developing deep adaptation methods \cite{DBLP:journals/corr/TzengHZSD14,DBLP:conf/icml/LongC0J15,DBLP:journals/jmlr/GaninUAGLLML16,DBLP:conf/iccv/HausserFMC17,long2016unsupervised,DBLP:conf/cvpr/TzengHSD17,DBLP:conf/cvpr/SaitoWUH18,long2018conditional,8890009,8956072,8475034,8449326,8963871}. And some other deep adaptation methods explore architecture designs \cite{DBLP:conf/iccv/CarlucciPCRB17,DBLP:conf/icml/XieZCC18,DBLP:conf/cvpr/MurezKKRK18,DBLP:conf/cvpr/VolpiMSM18,DBLP:conf/cvpr/HuKSC18,DBLP:conf/cvpr/ChenLWWC18,9108549,8836530,8475006}. Meanwhile, with the proposal of Generative Adversarial Nets (GAN) \cite{DBLP:conf/nips/GoodfellowPMXWOCB14}, GAN-based DA methods are proposed \cite{DBLP:journals/jmlr/GaninUAGLLML16,DBLP:conf/cvpr/TzengHSD17,ganin2016domain,8698453,8833506}. In these methods, a discriminator $D$ is designed to distinguish source and target features, meanwhile, the feature extractor $F$ is encouraged to generate domain-invariant features which confuse $D$. Overall, the adversarial loss is usually defined as
\begin{equation}
\begin{aligned} 
E_{D}=&-\mathbb{E}_{\mathbf{x} \sim p}(\mathbf{x}) \log D(F(\mathbf{x}))
\\ &-\mathbb{E}_{\mathbf{x} \sim q}(\mathbf{x}) \log (1-D(F(\mathbf{x}))).
\label{ld}
\end{aligned} 
\end{equation}

We can see that the only difference between Eq. \ref{ED} and Eq. \ref{ld} is the sample-wise weighting mechanism. And the ideal distribution of $w^s(\mathbf{x})$ $($or $w^t(\mathbf{x})$$)$ mentioned in Section \ref{Introduction} can degenerate universal domain adaptation problem by
\begin{equation}
\begin{aligned} 
E_{D}=&-\mathbb{E}_{\mathbf{x} \sim p}w^{s*}(\mathbf{x}) \log D(F(\mathbf{x}))
\\ &-\mathbb{E}_{\mathbf{x} \sim q}w^{t*}(\mathbf{x}) \log (1-D(F(\mathbf{x})))
\\=&-\mathbb{E}_{\mathbf{x} \sim p_c} \log D(F(\mathbf{x}))
\\ &-\mathbb{E}_{\mathbf{x} \sim q_c} \log (1-D(F(\mathbf{x}))),
\label{dg}
\end{aligned}
\end{equation}where $w^{s*}(\mathbf{x})$ and $w^{t*}(\mathbf{x})$ is the ideal $w^s(\mathbf{x})$ $($or $w^t(\mathbf{x})$$)$. A well-defined weight mechanism ensures further domain adaptation in $\mathcal{C}$. This is one of the basic solution for UDA.

Partial DA assumes that the target label set can be contained in the souce label set \cite{DBLP:conf/cvpr/CaoL0J18,DBLP:conf/cvpr/0017DLO18,cao2018partialeccv}. Obviously, this setting is not practical in the wild. Open set DA assumes that $\mathcal{C}_s\subset\mathcal{C}_t$ \cite{saito2018open,panareda2017open}. And in \cite{panareda2017open}, the classes private to each domain are totally consider as an “unknown” class. This setting is more practical than partial domain adaptation, but still hold that the classes private to each domain exist. Therefore, a general scenario of domain adaptation is established in \cite{you2019universal}, called universal domain adaptation.

\subsection{Margin Theory for Classification}
In the early study of classification problems, maximum margin techniques have been used to identify relevant-irrelevant label pairs \cite{DBLP:conf/nips/ElisseeffW01} or output coding margins \cite{DBLP:conf/aaai/LiuT15a}. And a margin theory for classification was developed by \cite{Vladimir2000Empirical}, where the 0-1 loss for classification is replaced by developed margin loss. Recently, \cite{8680669} maximize margins between relevant-relevant label pairs with different importance degrees. In \cite{DBLP:conf/aaai/JiaZ20}, a maximum margin multi-dimensional classification is investigated by leveraging classification margin maximization on individual class variable and modeling relationship regularization across class variables. Based on the margin theory introduced in \cite{Vladimir2000Empirical}, \cite{zhang2019bridging} introduced the margin loss to domain adaptation scenarios, achieving better performance than the use of traditional 0-1 loss.

In our previous work \cite{DBLP:journals/corr/abs-2004-10963}, the margin of classification is used to adaptively adjust the margin of triplet loss. In this work, the margin of classification is used to find known classes in the target label set. And a novel \emph{margin vector} is firstly introduced to universal domain adaptation setting.

\section{Conclusion}
In this paper, we employ a probabilistic method for locating the common label set, where each source class may come from the common label set with a probability. In particular, we propose a novel \emph{margin vector} for evaluating the probability of each source class from the common label set, which can estimate the probability accurately. We further propose a target margin register to enable \emph{margin vector} to update continuously during iterative training. This technology can solve UDA problem efficiently. Then we propose a simple universal adaptation network (S-UAN), which is more simple but efficient than existing UDA methods. A comprehensive evaluation shows that S-UAN works well in general UDA setting and outperforms the state-of-the-art methods by large margins.

\section*{Acknowledgments}
This work was supported in part by the National Natural Science Foundation of China under Grant 61671252, 61571233 and 61901229; the Natural Science Research of Higher Education Institutions of Jiangsu Province under Grant 19KJB510008.

\bibliographystyle{IEEEtran}
\bibliography{S-UAN}

\end{document}